\documentclass{article}
\usepackage{log_2026}						% for camera-ready version

\usepackage{booktabs}						% professional-quality tables
\usepackage{multirow}						% tabular cells spanning multiple rows
\usepackage{amsfonts}						% blackboard math symbols
\usepackage{graphicx}						% figures
\usepackage{duckuments}						% sample images
\usepackage{amsthm}

\usepackage{thmtools}
\usepackage{thm-restate}
\theoremstyle{remark}
\newtheorem{remark}{Remark}
\theoremstyle{lemma}

\usepackage[numbers,compress,sort]{natbib}	% for numerical citations
\title[A Label Noise Robust Algorithm for Graph
Data using Neighborhood Label Distribution]{A Label Noise Robust Algorithm for Graph Data using Neighborhood Label Distribution}

\author[Kumar and Hemachandra]{%
Pintu Kumar\\
Indian Institute of Technology Bombay \\
\email{pintuk@iitb.ac.in}\And
Nandyala Hemachandra\\
Indian Institute of Technology Bombay\\
\email{nh@iitb.ac.in}
}

\begin{document}

\maketitle

\begin{abstract}
Graph Convolutional Networks (GCNs) have demonstrated exceptional performance on graph-structured data. However, their success heavily relies on the availability of clean labels, an assumption that often fails in real-world scenarios. The learnability of GCNs is intrinsically linked to the neighborhood structure of nodes, making it critical to understand how this structure changes in the presence of label noise. In this work, we investigate how the neighborhood label distribution of a node changes under label noise. We mathematically derive that label noise smooths this distribution, which inherently increases its entropy and severely decreases interclass distinguishability. To address this challenge, we propose Leave-one-out Entropy Node Screening (LENS), an end-to-end framework that identifies noisy nodes by measuring the reduction in neighborhood entropy when a specific node is removed. LENS further incorporates a relabeling mechanism to recover and preserve valuable feature information instead of entirely discarding the identified noisy nodes. Crucially, the end-to-end computational overhead of LENS is remarkably low. Extensive evaluations demonstrate that our algorithm achieves state-of-the-art node classification accuracy while executing an order of magnitude faster than existing noise-robust approaches.
\end{abstract}

\section{Introduction}
Graph Neural Networks (GNNs), and particularly message-passing algorithms like Graph Convolutional Networks (GCNs), have proven to be exceptionally effective for node classification tasks on graphs \cite{luo2024classic,Hamilton2020GraphRL,kumar2026graph}. Similar to most traditional machine learning models, GCNs operate under the strong assumption that clean labels are available for training. However, this is rarely the case in practice. Real-world data is frequently corrupted by label noise due to factors such as human annotation errors, reliance on cheap crowdsourcing platforms, or inherent ambiguities in the data collection process \cite{wang2019symmetric,qian2023robust,tut_nh,DeGLIf,kumar2025edn,cgnnYuan2023LearningOG,manwani2013noise,ghosh2015making,pignnDu2021NoiserobustGL,crgnnLi2024ContrastiveLO,TSS,chen2024erase,Patrini2016MakingDN}. While researchers have dedicated significant effort to solving the label noise problem for independent and identically distributed (i.i.d.) data \cite{sastry2017robust,wang2019symmetric,Patrini2016MakingDN,Han2018CoteachingRT,Bahri2020DeepKF,tripathi2019cost,tut_nh}, these algorithms consistently underperform when applied to interconnected graph data \cite{dai2021nrgnn,qian2023robust,wang2024noisygl}. This led to a strand of research on learning in the presence of label noise, specifically for graph data \cite{dai2021nrgnn,qian2023robust,TSS,DeGLIf,kumar2025edn,yuan2023learninggraphslabelnoise,pignnDu2021NoiserobustGL,crgnnLi2024ContrastiveLO,RncglnZhu2024RobustNC,chen2024erase,wang2024noisygl}. A common assumption is that Traditional Noise-Robust Algorithms (TNRAs) fail on graph data simply because traditional algorithms ignore graph topology. Yet, empirical evidence suggests a deeper underlying issue: even when the standard neural network backbone of a TNRA is replaced with a topology-aware GCN, the algorithms still fail to yield the improvements seen in standard settings \cite{wang2024noisygl}. This indicates that graphs interact with label noise in a fundamentally different manner than their i.i.d. counterparts do.

To understand how graphs interact with label noise, it is crucial to examine how this noise impacts the fundamental structural quantities that dictate learning on graph data. A substantial body of work has established that a node's local neighborhood structure plays a vital role in how effectively a GCN can learn and aggregate information across a network \cite{mahomophily,Zhao2022NeighborhoodPI,Kohn2024EdgeSplittingMN}. In this work, we investigate how label noise distorts these foundational neighborhood patterns. Specifically, we analyze the mathematical change in the Neighborhood Label Distribution (NLD) of a node in the presence of noise. Although label noise robustness and the role of neighborhood on graph learning have both been studied extensively in isolation, bridging them provides critical insights. This work represents a first attempt at exploring the intersection of these two domains, providing a useful perspective on how noise structurally compromises graph learnability before any node filtering or robustness measures can be applied. Towards this, we ask: 

\begin{center}
    \emph{How does the injection of label noise perturb a node's Neighborhood Label Distribution? 
    \\ How can this understanding be used to identify noisy labeled nodes?
    \\ And, how does this perturbation impact class distinguishability?}
\end{center}

In this study, we theoretically demonstrate that label corruption acts as a smoothing agent on the NLD. This smoothing increases the entropy of NLD, making it progressively more Uniform distribution. Motivated by these theoretical insights, we introduce Leave-one-out Entropy Node Screening (LENS). LENS is an end-to-end, computationally efficient framework designed to identify noisy labeled nodes. LENS evaluates the change in neighborhood entropy when a specific node is computationally masked. Based on this, it classifies a label as clean or noisy. LENS achieves better performance than compared algorithms at a fraction of the computational cost. 

A direct consequence of this increased NLD entropy is a degradation in interclass distinguishability. As NLD becomes more uniform, the unique structural signatures of different classes begin to blend. We theoretically show this as a loss of interclass distinguishability. It is worth noting that NLD is node-dependent, whereas interclass distinguishability is a graph aggregate quantity. The main contributions of this work are as follows:

\textbf{1.} \textit{Theoretical Formulation of NLD Smoothing:} We model the transformation of a node's neighborhood label distribution (NLD) under various noise models, proving that corruption smooths the distribution and hence, increases NLD entropy.

\textbf{2.} \textit{A Novel Screening Framework (LENS):} We present an entropy-based, end-to-end screening mechanism that effectively isolates noisy nodes by analyzing leave-one-out entropy differentials. LENS outperforms competing algorithms while requiring significantly less computational power.

\textbf{3.} \textit{Analysis of Inter-class Distinguishability:} We establish that NLD smoothing directly erodes the structural boundaries between classes. We provide a formal proof for this structural collapse, under the Symmetric Label Noise (SLN) condition.

\section{Preliminaries}
Let $G = (V, E)$ be a graph where $V$ is the set of nodes and $E$ is the set of edges. Let $Y = \{1, 2, \dots, K\}$ denote a set of discrete classes, and let $y_v \in Y$ represent the ground-truth class label of a node $v \in V$. Let $N(v)$ denote neighborhood of node $v$, and $d_v = |N(v)|$, denote degree of node $v$. For node $v$, its noisy label is denoted by $\hat{y}_v$. 

\subsection{Noise Models}

\label{sec:noise_mod}
 We first describe various noise models that we consider. Our choice of noise models aligns with those commonly adopted by existing noise-robust algorithms \cite{dai2021nrgnn,qian2023robust,DeGLIf}
 
\subsubsection{Symmetric Label Noise (SLN)}
\label{sec:sln}
SLN \citep{tut_nh,tripathi2019cost,ghosh2015making,DeGLIf} refers to a type of label noise where the mislabelled samples are equally likely to be assigned any of the possible labels. Mathematically, $P(\hat{y}_v = l | y = m) = p/(K-1)=p_K$ where $l\neq m $ represent the possible labels and $p \in [0,1]$ is the noise level. The transition probability matrix for SLN ($Q_{SLN}$) is in Equation 
% \begin{equation}
% \label{matrix:sln}   
% Q_{sln} =
% \begin{bmatrix}
% 1- p &p/(K-1)&p/(K-1)& \ldots &p/(K-1)\\ 
% p/(K-1) &1-p &p/(K-1)& \ldots &p/(K-1)\\ 
% \vdots & \ddots & \ddots & \ddots & \vdots \\ 
% p/(K-1) & \ldots & \ddots &1-p &p/(K-1)\\ 
% p/(K-1) &p/(K-1)& \ldots &p/(K-1)&1-p
% \end{bmatrix}
% \end{equation}

\subsubsection{Pairwise Noise (PWN)}
\label{sec:pairwise_noise}
 The motivation behind Pairwise Noise (PWN) \citep{tut_nh,tripathi2019cost,ghosh2015making,DeGLIf} is that one is more likely to mislabel two similar classes. For Pairwise Noise, the true label of each node is flipped to a specific adjacent class with probability $p$. Unlike SLN, the noise is directional, where class $i$ is strictly corrupted into class $(i+1 \pmod K)$. The transition probability matrix is in Equation \ref{eq:pair_matrix} 

% \begin{equation}
% Q_{pw}=
% \begin{bmatrix}
% 1- p\ \ & p &\ \ 0  & \ldots & \ \ \ 0 \\ 
%  0& 1-p \ \  & \ \ \ \ p\ \ &  & \ \ \ 0\\ 
% \vdots &  & \ \ \ddots & \ \ \ddots &\ \ \  \vdots\\ 
%  0&  &  & \ \ \ 1-p & \ \ \ p\\ 
% p &0  & \dots &\ \ \ 0  &\ \ \ 1-p
% \end{bmatrix}
% \label{eq:pair_matrix}
% \end{equation}

\ref{matrix:sln}.
\noindent
\begin{minipage}{0.5\textwidth}
  \begin{equation}
    \label{matrix:sln}   
    \resizebox{0.85\textwidth}{!}{$
    Q_{sln} =
    \begin{bmatrix}
      1- p &p_K&p_K& \ldots &p_K\\ 
      p_K &1-p &p_K& \ldots &p_K\\ 
      \vdots & \ddots & \ddots & \ddots & \vdots \\ 
      p_K & \ldots & \ddots &1-p &p_K\\ 
      p_K &p_K& \ldots &p_K&1-p
    \end{bmatrix}
    $}
  \end{equation}
\end{minipage}%
\begin{minipage}{0.5\textwidth}
  \begin{equation}
    \label{eq:pair_matrix}
    \resizebox{0.85\textwidth}{!}{$
    Q_{pw}=
    \begin{bmatrix}
      1- p & p & 0  & \ldots & 0 \\ 
      0 & 1-p & p & \dots & 0\\ 
      \vdots &  & \ddots & \ddots & \vdots\\ 
      0 &  &  & 1-p & p\\ 
      p & 0  & \dots & 0  & 1-p
    \end{bmatrix}
    $}
  \end{equation}
\end{minipage}

\subsubsection{{Doubly stochastic noise models}}
In general, noise transition matrices are row-stochastic because the probabilities of a true label flipping to any of the available corrupted classes must sum to 1. A doubly stochastic matrix represents a special class of these noise models where the probabilities within each column also sum to 1. Both SLN and PWN are specific instances that are special cases of doubly stochastic matrices.

\subsection{Neighborhood Label Distribution}
\label{sec:nld_main}
For a specific class $j \in Y$, let $c_{v,j}$ denote the exact integer count of neighbors of $v$ that belong to class $j$, and $C_v:=[c_{v,1},\ldots,c_{v,K}]$. The neighborhood label distribution (NLD) is the fraction of neighboring nodes belonging to each class and is defined as 

$$l_v:= [l_{v,1},\ldots,l_{v,K}]=\left[\frac{c_{v,1}}{d_v},\ldots,\frac{c_{v,K}}{d_v}\right]$$

\section{Impact of Label Noise on NLD }
\label{sec:nld_res}
To quantify the impact of label corruption, we first analyze the transformation of a node's clean NLD in the presence of different label noise models. $\hat{l}_v$ denotes NLD of a noisy graph.

\subsection{Change in NLD under SLN}
In this section, we analyze the change in NLD in the presence of SLN (Section \ref{sec:sln}). 

\begin{restatable}{theorem}{nldsln}
\label{thm:sln_nld}
% Let $l_v = [l_{v,1}, l_{v, 2},\cdots,l_{v, K}]$   denote the initial Neighborhood Label Distribution (NLD) of a given node $v$ in a $K$-class setting. Under Symmetric Label Noise (SLN) with a noise rate $p$,  the changed NLD,  $\hat{l}_v = [\hat{l}_{v, 1}, \hat{l}_{v, 2}, \cdots, \hat{l}_{v, K}]$, is given by:
Let $l_v = [l_{v,1}, l_{v, 2},\cdots,l_{v, K}]$   denote the initial Neighborhood Label Distribution (NLD) of a given node $v$ in a $K$-class setting. Under Symmetric Label Noise (SLN) with a noise rate $p$,  the \textcolor{black}{expected probability mass of class $i$, denoted as $\mathbb{E}[\hat{l}_{v,i}]$}, is given by:
\begin{equation}
\label{eq:sln_main} 
\mathbb{E}[\hat{l}_{v, i}] = l_{v, i} \left( 1 - \frac{pK}{K-1} \right) + \frac{p}{K-1}, ~~~~ \forall~ i = 1, \cdots, K. 
\end{equation}
 \end{restatable}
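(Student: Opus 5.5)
The plan is to write the noisy count of class $i$ among the neighbors of $v$ as a sum of indicator variables and then use linearity of expectation. The graph structure, and hence $d_v$ and the clean counts $c_{v,j}$, is unaffected by label noise. Only the labels $\hat{y}_u$ of the neighbors $u \in N(v)$ are random. So
\[
\hat{l}_{v,i} = \frac{1}{d_v}\sum_{u \in N(v)} \mathbf{1}[\hat{y}_u = i],
\qquad
\mathbb{E}[\hat{l}_{v,i}] = \frac{1}{d_v}\sum_{u \in N(v)} P(\hat{y}_u = i \mid y_u).
\]
The denominator $d_v$ is deterministic, so there is no ratio-of-random-variables issue. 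Linearity of expectation needs no independence between flips, though SLN as stated in Section~\ref{sec:sln} provides it anyway.

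Next I group the neighbors by their true class. For each $j$ there are $c_{v,j}$ neighbors with $y_u = j$. By the transition matrix $Q_{sln}$ in~\eqref{matrix:sln}, each such neighbor lands in class $i$ with probability $Q_{ji}$, which equals $1-p$ if $j=i$ and $p_K = p/(K-1)$ otherwise. This gives
\[
\mathbb{E}[\hat{l}_{v,i}] = \sum_{j=1}^K l_{v,j}\, Q_{ji} = (1-p)\, l_{v,i} + \frac{p}{K-1}\sum_{j\neq i} l_{v,j}.
\]
In vector form this is $\mathbb{E}[\hat{l}_v] = l_v Q_{sln}$, which is the form I would record for reuse with the other noise models.

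Finally I use the normalization $\sum_j l_{v,j} = 1$. It holds because the counts $c_{v,j}$ partition $N(v)$, so $\sum_j c_{v,j} = d_v$. Substituting $\sum_{j\ne i} l_{v,j} = 1 - l_{v,i}$ and collecting terms in $l_{v,i}$ gives the coefficient $1 - p - \tfrac{p}{K-1} = 1 - \tfrac{pK}{K-1}$ and the constant $\tfrac{p}{K-1}$. This is exactly~\eqref{eq:sln_main}.

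There is no real obstacle here. The only points needing care are stating that $d_v$ is fixed, so the expectation passes through the normalization, and that we need $d_v \ge 1$ for the NLD to be defined. Both should be stated explicitly.
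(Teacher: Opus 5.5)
Your proof is correct and takes essentially the same route as the paper. The paper's first proof splits the noisy count into retained and influx binomial counts and takes expectations, and its alternate proof computes $\mathbb{E}[\hat{l}_{v,i}] = (1-p)\,l_{v,i} + \frac{p}{K-1}\sum_{j\neq i} l_{v,j}$ and then substitutes $\sum_{j\neq i} l_{v,j} = 1 - l_{v,i}$, exactly as you do; your observation that linearity of expectation makes the independence of flips unnecessary is a correct minor sharpening of the paper's first argument.
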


% Let $l_v = [l_1, l_2,\cdots,l_K]$  \textcolor{blue}{as above, we need $l_{v}$ and $l_{v, 1}, \cdots, l_{v, K}$ \textcolor{black}{in line 101 I have mentioned that we are dropping v}} denote the initial Neighborhood Label Distribution (NLD) of a given node in a $K$-class setting. Under Symmetric Label Noise (SLN) with a noise rate $p$,  the changed NLD,  $\hat{l} = [\hat{l}_1, \hat{l}_2, \cdots, \hat{l}_K]$, is given by:
% \begin{equation}
% \label{eq:sln_main} 
% \hat{l}_{i} = l_{i} \left( 1 - \frac{pK}{K-1} \right) + \frac{p}{K-1}, ~~~~ \forall~ i = 1, \cdots, K. 
% \end{equation}
% \end{restatable}

% \textcolor{blue}{Let $l=(l_1,l_2,\dots,l_K)$ denote the initial Neighborhood Label Distribution (NLD) of a given node in a $K$-class setting. Under Symmetric Label Noise (SLN) with a noise rate $p$,  the changed/new NLD,  $\hat{l} = (\hat{l}_1, \hat{l}_2, \cdots, \hat{l}_K)$, is such that} 
% \begin{equation}
% \label{eq:sln_main} 
% \hat{l}_i = l_i \left( 1 - \frac{pK}{K-1} \right) + \frac{p}{K-1}, ~~~~ \forall~ i = 1, \cdots, K.  

% %\textcolor{blue}{this being a convex combination, is higher -- alternate proof.}
% \end{equation}

Proof for Theorem \ref{thm:sln_nld} is in Appendix \ref{proof:nldsln}.

\textbf{The Smoothing Factor}
 The difference between probability mass any two classes $i$ and $j$ after the noise is applied to the label of node $v$ is given by :$$|\mathbb{E}[\hat{l}_{v,i}] - \mathbb{E}[\hat{l}_{v,j}]| = |(l_{v,i} - l_{v,j)}| \left |1 - \frac{pK}{K-1} \right |$$
 
 As $2\ge \frac{pK}{K-1} >0$, so the term $\left( 1 - \frac{pK}{K-1} \right)$ acts as a contraction (For $1>p > 0$, this multiplier is strictly less than $1$). The gap of the majority class and the minority classes shrinks. The entire distribution is pulled relentlessly toward the uniform distribution $\left[ \frac{1}{K}, \frac{1}{K}, \dots, \frac{1}{K} \right]$. To formalize this intuition, we compare the entropy $H(\cdot)$ of $l_v$ and $\mathbb{E}[\hat{l}_v]$.

% \textcolor{blue}{we can have the variance expression as a lemma here}

% \textcolor{blue}{can that expression help us to have a better algo (more informative)?}

\subsubsection{Change in NLD entropy by SLN} Next, we investigate the effect of noise on the entropies of NLDs. 

%\textcolor{blue}{We quantify the effect of SLN in terms of entropy of noise label distributions.}
\begin{restatable}{corollary}{slnsln}
\label{cor:sln_entropy}
For SLN, we have $H(\mathbb{E}[\hat{l}_v]){\ge} H(l_v)$ for SLN
\end{restatable}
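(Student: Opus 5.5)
By Theorem \ref{thm:sln_nld}, the expected noisy NLD is an explicit affine image of $l_v$. The plan is to write it as a mixture of $l_v$ and the uniform distribution and then use concavity of entropy. Set $\alpha := \frac{pK}{K-1}$ and $u := [\frac1K,\ldots,\frac1K]$. Since $\frac{p}{K-1} = \frac{\alpha}{K}$, Equation \eqref{eq:sln_main} becomes
\begin{equation*}
\mathbb{E}[\hat{l}_v] = (1-\alpha)\, l_v + \alpha\, u .
\end{equation*}
Equivalently, $\mathbb{E}[\hat{l}_v] = l_v Q_{sln}$, where $Q_{sln}$ is the doubly stochastic matrix of Equation \eqref{matrix:sln}. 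I would keep both forms, because they cover different ranges of $p$.

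\textbf{Case $0\le p\le \frac{K-1}{K}$.} Here $\alpha\in[0,1]$, so $\mathbb{E}[\hat{l}_v]$ is a genuine convex combination of $l_v$ and $u$. Concavity of $H$ gives
\begin{equation*}
H(\mathbb{E}[\hat{l}_v]) \ge (1-\alpha)H(l_v) + \alpha H(u).
\end{equation*}
Since $H(u)=\log K\ge H(l_v)$, the right-hand side is at least $H(l_v)$, which proves the claim in this range.

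\textbf{Case $p>\frac{K-1}{K}$.} Now $\alpha>1$ and the mixture representation fails. This is the main obstacle, since the paper allows $p\in[0,1]$. My first attempt would be the majorization argument. $Q_{sln}$ has nonnegative entries for every $p\in[0,1]$ and is doubly stochastic. By the Hardy--Littlewood--P\'olya theorem, $y = l_v Q_{sln}$ is therefore majorized by $l_v$. Because $H$ is Schur-concave, $H(y)\ge H(l_v)$ follows for all $p\in[0,1]$, and this also recovers the first case.

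If I want to avoid citing majorization, a direct fallback is to write $y_i=\sum_j l_{v,j}Q_{ji}$. Then Jensen's inequality for the concave function $\phi(t)=-t\log t$ gives $\phi(y_i)\ge \sum_j Q_{ji}\,\phi(l_{v,j})$. Summing over $i$ and using that each row of $Q_{sln}$ sums to one yields $\sum_i \phi(y_i) \ge \sum_j \phi(l_{v,j})$, that is, $H(y)\ge H(l_v)$. In both arguments the only facts needed are nonnegativity of the entries and the row and column sums. I would check these for $Q_{sln}$ at the start of the proof.

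Since this last argument uses only double stochasticity, I would remark that the same proof extends verbatim to PWN and to every doubly stochastic noise model.
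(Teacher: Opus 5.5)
Your first case is the paper's proof. The paper writes $\mathbb{E}[\hat{l}_v]=\alpha l_v+(1-\alpha)U$ with $\alpha=1-\frac{pK}{K-1}$ (your $1-\alpha$), applies concavity of $H$, and uses $H(U)=\log K\ge H(l_v)$.

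Where you differ is the regime $p>\frac{K-1}{K}$. The paper simply asserts $\alpha\in(0,1)$, which holds only for $p<\frac{K-1}{K}$. For $\frac{K-1}{K}<p\le 1$, which the SLN model of Section~\ref{sec:sln} permits, its coefficient is negative. The representation is then affine rather than convex, so the concavity step is unavailable. Your second case therefore fills a range the paper's proof silently skips; it is not redundant caution.

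Your fix relies only on $Q_{sln}$ being entrywise nonnegative and doubly stochastic for all $p\in[0,1]$. The paper uses the same property in Theorem~\ref{thm:doubly_stochastic}, but via the Birkhoff--von Neumann decomposition into permutation matrices together with concavity and permutation invariance of $H$. Your coordinatewise Jensen argument is more elementary and self-contained: the column sums supply the convex weights, and the row sums collapse the double sum. The Hardy--Littlewood--P\'olya plus Schur-concavity route is the standard equivalent.

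As for what each approach buys:
\begin{itemize}
\item The doubly stochastic argument alone proves the corollary for every $p$, so your case split could be dropped.
\item The mixture argument, where it applies, also gives the quantitative bound $H(\mathbb{E}[\hat{l}_v])\ge H(l_v)+\frac{pK}{K-1}\bigl(\log K-H(l_v)\bigr)$. This measures how much the smoothing raises the entropy.
\end{itemize}
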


%{$H(\hat{l})\overset{?}{\ge} H(l)$ for SLN}

% \textcolor{blue}{this proof in Appendix or here? if here, we need `proof' environment? } 
\begin{proof}Let $\alpha:=1-\frac{pK}{K-1} \in (0, 1)$, then using Equation \ref{eq:sln_main}: $\mathbb{E}[\hat{l}_v]=\alpha l_v +(1-\alpha) U.$
Where $U$ is a uniform distribution over $K$ elements. As entropy is a concave function, and  uniform distribution has maximum entropy \cite{cover1999elements},

\begin{equation}
\label{eq:nldchangesln}
    H(\mathbb{E}[\hat{l}_v])\geq \alpha H(l_v)+(1-\alpha) H(U)\geq \alpha H(l_v)+(1-\alpha) H(l_v)=H(l_v)
\end{equation}

So, $\mathbb{E}[\hat{l}_v]$ is closer to uniform distribution compared to $l_v$ \cite{cover1999elements}.
\end{proof}

\subsection{Impact of PWN on NLD Entropy}
In this section, we analyze the change in NLD under PWN (Section \ref{sec:pairwise_noise}).

\begin{restatable}{theorem}{nldpwn}
\label{thm:pwn_nld}
% Let $l_v=[l_{v, 1}, l_{v,2},\cdots,l_{v, K}]$ denote the initial NLD of a given node $v$ in a $K$-class setting. Under Pairwise Noise (PWN) with a noise rate $p$, the changed NLD, $\hat{l}_v=[\hat{l}_{v, 1},\cdots, \hat{l}_{v, K}]$, is given by:
Let $l_v=[l_{v, 1}, l_{v,2},\cdots,l_{v, K}]$ denote the initial NLD of a given node $v$ in a $K$-class setting. Under Pairwise Noise (PWN) with a noise rate $p$, \textcolor{black}{expected probability mass of class $i$, denoted as $\mathbb{E}[\hat{l}_{v,i}]$}, is given by:
\begin{equation}
\label{eq:pwn_main} 
\mathbb{E}[\hat{l}_{v, i}] = l_{v, i}(1-p) + l_{v, i-1}p ~~~~ \forall~ i = 1, \cdots, K.
\end{equation}
where $l_{v, 0} \equiv l_{v, K}$. Furthermore, $H(\mathbb{E}[\hat{l}_v]) \ge H(l_v)$.
\end{restatable}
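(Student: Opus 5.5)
The plan has two parts: first derive the expected noisy NLD by linearity of expectation, then show the entropy inequality using concavity of entropy in the doubly stochastic form, as in Corollary \ref{cor:sln_entropy}.

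\textbf{Expected NLD.} Condition on the clean labels of the $d_v$ neighbors of $v$. Write $\hat{c}_{v,i}=\sum_{u\in N(v)}\mathbb{1}[\hat{y}_u=i]$, so that $\hat{l}_{v,i}=\hat{c}_{v,i}/d_v$, since the degree does not change under label noise.
\begin{itemize}
\item Linearity gives $\mathbb{E}[\hat{c}_{v,i}]=\sum_{u\in N(v)}P(\hat{y}_u=i\mid y_u)=\sum_{m}c_{v,m}\,(Q_{pw})_{m,i}$.
\item Under PWN, class $i$ receives mass only from true class $i$ (which keeps its label with probability $1-p$) and from class $i-1 \pmod K$ (which flips into $i$ with probability $p$), by Equation \ref{eq:pair_matrix}.
\item Hence $\mathbb{E}[\hat{c}_{v,i}]=(1-p)c_{v,i}+p\,c_{v,i-1}$. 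Dividing by $d_v$ yields Equation \ref{eq:pwn_main}, with the cyclic convention $l_{v,0}\equiv l_{v,K}$.
\end{itemize}
Equivalently, $\mathbb{E}[\hat{l}_v]=l_v Q_{pw}$ as a row vector.

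\textbf{Entropy.} Unlike SLN, $\mathbb{E}[\hat{l}_v]$ is not a mixture of $l_v$ with the uniform distribution. Instead, let $S$ be the cyclic shift permutation, so $(l_vS)_i=l_{v,i-1}$. Then
\[\mathbb{E}[\hat{l}_v]=(1-p)\,l_v+p\,(l_vS).\]
Two facts finish the argument:
\begin{itemize}
\item Entropy is invariant under permuting coordinates, so $H(l_vS)=H(l_v)$.
\item Entropy is concave \cite{cover1999elements}, so
\[H(\mathbb{E}[\hat{l}_v])\ge(1-p)H(l_v)+pH(l_vS)=H(l_v).\]
\end{itemize}
This is the same two-step pattern as Equation \ref{eq:nldchangesln}, with the permuted copy of $l_v$ playing the role of $U$.

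More generally, any doubly stochastic $Q$ is a convex combination of permutation matrices by Birkhoff--von Neumann, so $H(l_vQ)\ge H(l_v)$ for every doubly stochastic noise model. I would state this as a remark, since it covers both SLN and PWN at once.

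\textbf{Main obstacle.} The only genuinely nontrivial step is the entropy inequality, because the uniform-mixture trick from the SLN case does not apply here. The permutation-invariance observation resolves it cleanly.

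A secondary point is the meaning of $\hat{l}_v$ in the first part: Equation \ref{eq:pwn_main} must be read as a statement about expectations over the independent label flips of the neighbors, conditional on the graph and the clean labels, which is what the linearity argument proves.
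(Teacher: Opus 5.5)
Your proposal is correct and follows the paper's proof essentially step for step. You compute the expectation by indicator linearity where the paper uses the retained/influx binomials (the same computation), and you get the entropy inequality from $\mathbb{E}[\hat{l}_v]=(1-p)\,l_v+p\,l_{\mathrm{shift}}$ via concavity plus permutation invariance, exactly as the paper does; your Birkhoff--von Neumann remark is the paper's Theorem \ref{thm:doubly_stochastic}.
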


This theorem demonstrates that pairwise noise acts as a cyclic smoothing operation on the neighborhood distribution. The inevitable increase in entropy indicates that the NLD becomes progressively more uniform, which directly changes the local topology of the graph. Proof for Theorem \ref{thm:pwn_nld} is provided in Appendix \ref{proof:nldpwn}.

% \section{Change in NLD in presence of Pairwise Noise (PWN)}

% Using similar notation as in SLN. To find the new probability mass for our majority class $1$, we look at where its mass comes from. It retains its own mass with probability $1-p$, and it receives the corrupted mass from the "previous" class $K$ with probability $p$.$$\hat{l}_1 = l_1(1-p) + l_K p$$For any other arbitrary class $i$ (where $i \neq 1$), its new expected probability mass is:$$\hat{l}_i = l_i(1-p) + l_{i-1} p$$

% \textbf{Alternate proof:}
% Similar to SLN case, $c_{\mathrm{stay}}\sim Binomial(c_j,1-p)$ and $c_{\mathrm{influx}}\sim c_{j-1},p$ ($c_{0}\equiv c_K$). Then,

% $$E[\hat{l}_j]=l_j(1-p)+l_{j-1}p$$
% and
% $$Var(l_j)=\frac{p(1-p)}{d}(c_j+c_{j-1})$$
% \textcolor{black}{image validating above two formula}

% \subsection{$H(\hat{l})\geq H(l)$ for PWN}
% For $l=[l_1\ldots,l_K]$, define $l_{\mathrm{shift}}=[l_K,l_1,\ldots,l_{K-1}]$.
% Using Equation \ref{eq:nldpwchange}
% $$l_j=l_j(1-p)+l_{j-1}$$

% This is equivalent to 
% $$\hat{l}=(1-p)l+p l_{\mathrm{shift}}$$.
% See that $H(l)=H(l_{\mathrm{shift}})$. This and the concave property of the entropy function gives

% \begin{equation*}
%     \begin{aligned}
%         H(l')&\geq(1-p)H(l)+pH(l_{\mathrm{shift}})=H(x)
%     \end{aligned}
% \end{equation*}

\subsection{Change in NLD Entropy by Doubly Stochastic Noise Models}
% \textcolor{blue}{this doubly stochastic noise model includes SL:N and PWN as special cases.} 

% \textcolor{blue}{we now generalise the above results to a broader class of noise models}

% \textcolor{blue}{to update the contributions and abstract}

In this section, we investigate whether the entropy of the NLD always increases under any arbitrary noise model. We consider the broad class of doubly stochastic noise matrices, which includes SLN and PWN as special cases.

\begin{restatable}{theorem}{nlddb}
\label{thm:doubly_stochastic}
Let $l_v$ be the initial NLD of node $v$ and $Q$ be any doubly stochastic noise matrix. The \textcolor{black}{expected} NLD of node $v$ of the noisy labeled graph, denoted as $\textcolor{black}{\mathbb{E}[l_vQ]}$, satisfies $H(\mathbb{E}[\hat{l}_v]) \geq H(l_v)$. 
%\textcolor{green}{NLD of the node of the noisy labeled graph}
\end{restatable}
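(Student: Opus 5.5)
The plan has two parts: first identify the expected noisy NLD, then show that multiplying by a doubly stochastic matrix cannot decrease entropy.

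\textbf{Step 1: the expected noisy NLD.} I would follow the proofs of Theorems \ref{thm:sln_nld} and \ref{thm:pwn_nld}. Each neighbor $u\in N(v)$ with true label $y_u=m$ receives noisy label $l$ independently with probability $Q_{ml}$. By linearity of expectation,
\begin{equation*}
\mathbb{E}[\hat{c}_{v,i}]=\sum_{m} c_{v,m}Q_{mi},
\end{equation*}
and since $d_v$ is unaffected by label noise, $\mathbb{E}[\hat{l}_v]=l_vQ$. This matches the notation in the statement. Row-stochasticity guarantees that $l_vQ$ is again a probability vector.

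\textbf{Step 2: entropy does not decrease.} Birkhoff--von Neumann lets us write $Q=\sum_{\sigma}\lambda_\sigma P_\sigma$, a convex combination of permutation matrices with $\lambda_\sigma\ge 0$ and $\sum_\sigma\lambda_\sigma=1$. Then
\begin{equation*}
l_vQ=\sum_\sigma \lambda_\sigma\, l_vP_\sigma .
\end{equation*}
Each $l_vP_\sigma$ is a permutation of the coordinates of $l_v$, so $H(l_vP_\sigma)=H(l_v)$. Concavity of entropy, the same tool used in Corollary \ref{cor:sln_entropy} and in the PWN case where $Q_{pw}=(1-p)I+pP_{\mathrm{shift}}$, then gives
\begin{equation*}
H(l_vQ)\ge \sum_\sigma \lambda_\sigma H(l_vP_\sigma)=H(l_v).
\end{equation*}

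\textbf{Main obstacle.} The only nontrivial ingredient is the Birkhoff decomposition, which I would cite rather than prove. If a self-contained argument is preferred, I would instead argue directly with Jensen's inequality. Set $q=l_vQ$ and $\phi(x)=-x\log x$, which is concave. Since $\sum_j Q_{ji}=1$, each $q_i=\sum_j Q_{ji}l_{v,j}$ is a convex combination of the $l_{v,j}$ with weights $Q_{ji}$, so Jensen gives
\begin{equation*}
\phi(q_i)\ge \sum_j Q_{ji}\,\phi(l_{v,j}).
\end{equation*}
Summing over $i$ and using the row sums $\sum_i Q_{ji}=1$ yields
\begin{equation*}
H(q)=\sum_i\phi(q_i)\ge \sum_j \phi(l_{v,j})\sum_i Q_{ji}=H(l_v).
\end{equation*}
This version uses column-stochasticity to get the Jensen weights and row-stochasticity for the final summation, which shows exactly where double stochasticity is needed. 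I would present this Jensen argument as the main proof and mention the Birkhoff view as intuition, and also as a unifying explanation of the SLN and PWN cases.
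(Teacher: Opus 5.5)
Your proposal is correct. Your Step~2 is exactly the paper's proof: Birkhoff--von Neumann, concavity of $H$ applied to the mixture $l_vQ=\sum_j\theta_j\,l_vP_j$, and $H(l_vP_j)=H(l_v)$. Your Step~1 supplies the identity $\mathbb{E}[\hat{l}_v]=l_vQ$, which the paper's proof simply asserts.

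The coordinatewise Jensen argument you would promote to the main proof is a genuinely different route. It is the standard direct proof that a separable concave functional $\sum_i\phi(q_i)$ cannot decrease under a doubly stochastic map, and it checks out. Each $q_i=\sum_jQ_{ji}\,l_{v,j}$ is a convex combination of the $l_{v,j}$ because the columns of $Q$ sum to one. The double sum then collapses to $H(l_v)$ because the rows sum to one.

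This route buys self-containment: it needs only concavity of the scalar function $-x\log x$ and no cited decomposition theorem. It also shows transparently where each half of double stochasticity enters. That directly explains the paper's Counterexample~1: its first column sums to $1.5$, so $q_1=1/2$ exceeds every $l_{v,j}=1/3$ and is no longer an average of them.

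The Birkhoff route, in exchange, works verbatim for any concave permutation-invariant functional of the distribution, not only separable ones. It also exhibits the earlier SLN and PWN results as special cases. With $P$ the cyclic shift, $Q_{pw}=(1-p)I+pP$ and $Q_{sln}=(1-p)I+p_K\sum_{s=1}^{K-1}P^s$ are explicit convex combinations of permutation matrices.
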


The proof relies on the Birkhoff-von Neumann theorem \cite{marshall1979inequalities} and is provided in Appendix \ref{app:proof_arbitrary}.

\begin{remark}
\label{rem:counterexamples}
The monotonic increase in NLD entropy guaranteed by Theorem \ref{thm:doubly_stochastic} relies strictly on the noise matrix $Q$ being doubly stochastic and uniformly applied across all nodes. If the label noise process deviates from these conditions, the entropy of the neighborhood distribution is not guaranteed to increase. Specifically, entropy can decrease if:
\begin{enumerate}
    \item The global noise transition matrix $Q$ is row-stochastic but not doubly stochastic.
    \item The noise is node-dependent or class-dependent (even if the individual node-specific transition matrices are doubly stochastic).
 \item Concrete counterexamples demonstrating both of these scenarios are detailed in Appendix \ref{app:counterexamples}.
\end{enumerate}
\end{remark}
{\color{black}
Next, we derive a high probability bound on the difference between the entropy of the expected noisy NLD and the realized noisy NLD. The proof in \ref{app:proof_hpb} uses Bretagnolle-Huber-Carol inequality \cite{vanderVaartWellner2003ed2}
 and Fannes-Audenaert continuity of entropy \cite{Audenaert2006ASC}. 
\begin{restatable}{theorem}{hpb}
\label{thm:hpb}
    Let $l_v$ denote the initial NLD of a given node $v$ with degree $d_v$ in a K-class setting. Let $\hat{l}_v$ denote the NLD of its noisy graph obtained by corrupting nodes independently according to a common
row-stochastic transition matrix Q. For a given $\delta\in(0,1)$, the following holds with probability at least $1-\delta$
    \begin{equation}
    |H(\hat{l}_v)-H(\mathbb{E}[\hat{l}_v])|\leq \sqrt{\frac{K\ln(2)+\ln(1/\delta)}{2~d_v}} \ln(K-1) + \ln 2
    \end{equation}
\end{restatable}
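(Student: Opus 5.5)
The argument has two stages. First we show that the realized noisy NLD $\hat{l}_v$ concentrates around its expectation in total variation. Then we turn that into an entropy bound using a continuity estimate for entropy.

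\textbf{Stage 1: concentration.} Under independent corruption, the $d_v$ neighbors of $v$ receive noisy labels independently. Neighbor $u$ gets a label drawn from row $y_u$ of $Q$. The realized vector $\hat{l}_v$ is therefore the empirical distribution of $d_v$ independent, but not identically distributed, categorical draws, and its mean is $\mathbb{E}[\hat{l}_v]=l_vQ$.

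We write the $L_1$ distance as a supremum over subsets:
\[
\|\hat{l}_v-\mathbb{E}[\hat{l}_v]\|_1=2\max_{A\subseteq Y}\bigl(\hat{l}_v(A)-\mathbb{E}[\hat{l}_v](A)\bigr).
\]
For each fixed $A$, the quantity $\hat{l}_v(A)$ is an average of $d_v$ independent indicators. Hoeffding's inequality then gives a tail of $\exp(-2d_v t^2)$.

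We take a union bound over the $2^K$ subsets; this is exactly the Bretagnolle--Huber--Carol argument, which extends to the non-identical case because Hoeffding only needs independence. It yields, with probability at least $1-\delta$,
\[
\tfrac12\|\hat{l}_v-\mathbb{E}[\hat{l}_v]\|_1\le \varepsilon:=\sqrt{\frac{K\ln 2+\ln(1/\delta)}{2d_v}}.
\]

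\textbf{Stage 2: continuity of entropy.} On this event we apply the Fannes--Audenaert inequality. Here $T$ denotes the total variation distance $\tfrac12\|P-R\|_1$, and the inequality states that for distributions on $K$ points,
\[
|H(P)-H(R)|\le T\ln(K-1)+h(T),
\]
where $h$ is the binary entropy in nats.

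Since $h(T)\le \ln 2$ for every $T$, we get $|H(\hat{l}_v)-H(\mathbb{E}[\hat{l}_v])|\le T\ln(K-1)+\ln 2$. Substituting the bound $T\le\varepsilon$ gives the claim.

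\textbf{Main obstacle.} The delicate step is Stage 1: we must justify the Bretagnolle--Huber--Carol bound when the draws are not identically distributed, because each neighbor's label distribution depends on its true class. This is handled by the subset and Hoeffding argument described above.

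A minor point is that Fannes--Audenaert is stated for $T\le 1-1/K$, and we need it to hold for every $T$. We would cover this in one of two ways. One option is to note that the bound holds trivially when its right-hand side exceeds $\ln K$, since the entropy difference is always at most $\ln K$. Alternatively, we use the version in which $T\ln(K-1)+h(T)$ is replaced by its monotone envelope. The looseness introduced by the crude bound $h(T)\le\ln 2$ absorbs this in either case.
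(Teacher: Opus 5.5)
Your proof is correct and follows the paper's route: the Bretagnolle--Huber--Carol bound $\mathbb{P}(\|\hat{l}_v-\mathbb{E}[\hat{l}_v]\|_1\ge\epsilon)\le 2^K e^{-d_v\epsilon^2/2}$, then Fannes--Audenaert with $T=\frac{1}{2}\|\hat{l}_v-\mathbb{E}[\hat{l}_v]\|_1$, then the crude bound $H([T,1-T])\le\ln 2$. Your Hoeffding-plus-union-bound derivation usefully justifies the concentration step for neighbors whose noisy labels are independent but not identically distributed, which the paper glosses over by calling $\hat{l}_v$ multinomial; your range caveat is harmless but unnecessary, since Audenaert's inequality holds for all $T\in[0,1]$.
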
}

\section{LENS: Leave-one-out Entropy Node Screening}
Based on the theoretical results obtained in Equation \ref{eq:nldchangesln}, alongside Theorems \ref{thm:pwn_nld} and \ref{thm:doubly_stochastic}, we establish that label noise systematically smooths a node's Neighborhood Label Distribution and increases its overall entropy. Motivated by this, we propose a novel, entropy-based methodology to effectively screen and isolate clean nodes from their corrupted counterparts.

\subsection{Identifying Noisy Nodes}
\label{sec:identify_noisy}
As established, label noise increases the entropy of a node's \textcolor{black}{expected} NLD. By extension, the presence of a noisy node $v$ in the neighborhood of a node $u$ is likely to increase the NLD entropy of $u$. Let $h(u):= H(\text{NLD}(u))$ denote the initial NLD entropy of $u$, and let $h_{-v}(u)$ denote the NLD entropy of $u$ when node $v$ is explicitly removed from its neighborhood. In a graph with noisy labels, if the removal of node $v$ reduces the NLD entropy of its neighbor $u$ (i.e., $h(u) - h_{-v}(u) > 0$), it is a strong indicator that $v$ possesses a noisy label (see the ``Identify noisy nodes'' block in Figure \ref{fig:algo_lens}). 

Because a single node $v$ simultaneously influences the neighborhoods of all its adjacent nodes, we must aggregate these localized entropy changes into a comprehensive scalar score to classify $v$ as either clean or noisy. To achieve this, we introduce the following two aggregation strategies, which define the two variants of the LENS framework:

    \textbf{LENS(Sum):} A node $v$ is classified as noisy if the cumulative entropy reduction across its neighborhood exceeds a threshold $\tau_s$, formulated as $\sum_{u \in N(v)} (h(u) - h_{-v}(u)) > \tau_s$. We set $\tau_S = 0$; while we empirically evaluated other values, a threshold of 0 consistently demonstrated the most robust performance across different graph topologies and noise levels. 

 \textbf{LENS(Vote):} A node $v$ is classified as noisy if the proportion of its neighbors that experience an entropy reduction exceeds a threshold $\tau_v$, formulated as $\frac{|\{u \in N(v) \mid h(u) - h_{-v}(u) > 0\}|}{|N(v)|} > \tau_v$. We employ a strict majority rule by setting $\tau_v = 0.5$.

The nodes identified as clean form the set $D_c$, while the identified noisy nodes form a set $D_n$. As illustrated in Figure \ref{fig:met_pw}, both aggregation methods effectively separate the clean and noisy nodes.

\begin{figure}[!h]
    \centering
    \includegraphics[width=1 \linewidth]{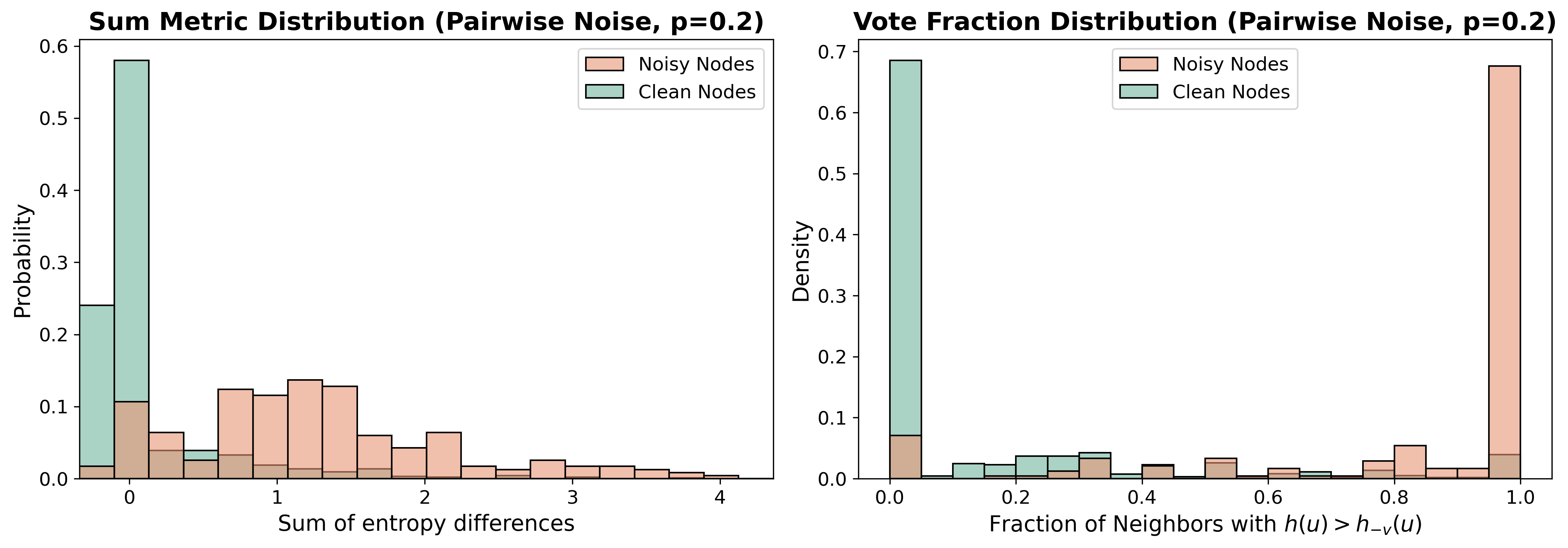}
    \caption{Distribution of the Sum and Vote aggregation metrics for clean and noisy nodes under 20\% Pairwise Noise. The clear separation between the distributions validates the effectiveness of both LENS variants in isolating corrupted labels.}
    \label{fig:met_pw}
\end{figure}
% \begin{figure}[!h]
%     \centering
%     \includegraphics[width=1 \linewidth]{figures/metric_distribution_Symmetric_0.2.png}
%     \caption{Metric distribution symmetric label noise, noise level 0.2}
%     \label{fig:met_sym}
% \end{figure}

% \begin{figure}
%     \centering
%     \includegraphics[width=1\linewidth]{figures/ete_lens.drawio.pdf}
%     \caption{end-to-end framework LENS \textcolor{blue}{the text in above blocks needs to be in darker shade}}
%     \label{fig:algo_lens}
% \end{figure}

\subsection{Change in Noise Level}
To capture the magnitude of noise reduction when moving from noisy data $D_{train}$ to $D_c$, we check the amount of noise left in $D_c$. The results for Cora dataset are reported in Table \ref{tab:denoising_results}. We see that both the methods (LENS(sum) and LENS(vote)) lead to a significant decrease in the noise.
\begin{table}[htpb]
\centering
\caption{{Noise levels (\%) in $D_c$ after applying Sum and Vote denoising methods under Symmetric Label Noise (SLN) and Pairwise Label Noise (PWN).}}
\label{tab:denoising_results}
\begin{tabular}{lccccccccc}
\toprule
\multirow{3}{*}{\textbf{Method}} & \multicolumn{9}{c}{\textbf{Initial Noise Level (\%)}} \\
\cmidrule(lr){2-10}
& \multicolumn{5}{c}{\textbf{SLN}} & \multicolumn{4}{c}{\textbf{PWN}} \\
\cmidrule(lr){2-6} \cmidrule(lr){7-10}
& \textbf{10} & \textbf{20} & \textbf{30} & \textbf{40} & \textbf{50} & \textbf{10} & \textbf{20} & \textbf{30} & \textbf{40}  \\
\midrule
LENS(Sum)  & 1.38 & 1.86 & 4.71 & 8.31  & 16.72 & 1.02 & 4.82 & 11.00 & 32.51  \\
LENS(Vote) & 1.70 & 3.58 & 6.87 & 11.24 & 21.29 & 1.97 & 7.13 & 14.31 & 34.34  \\
\bottomrule
\end{tabular}
\end{table}

% \begin{figure}
%     \centering
%     \includegraphics[width=1\linewidth]{figures/confusion_matrix_noise_0.4.pdf}
%     \caption{Confusion matrix for predicting noisy nodes for the Cora dataset in the presence of SLN with $p=0.4$. Left: Sum aggregation, Right: Vote aggregation \textcolor{black}{Rather than using figure replace by matrix, to save space, If even more space is required we can accomodate these values in running line}}
%     \label{fig:confusion_matrix}
% \end{figure}

\subsubsection{Addressing High Number of False Positives}
\label{sec:high_fp}
Next, we analyzed the confusion matrix for predicting a node as noisy. We observed a consistently high rate of False Positives across all noise levels for both the Sum and Vote aggregation methods. For example, for the Cora dataset at 40\% SLN noise, the Sum variant yields 954 true positives (TP), 1059 true negatives (TN), 599 false positives (FP), and 96 false negatives (FN). In comparison, for the Vote variant, TP is 867, TN is 1445, FP is 213, and FN is 183. We hypothesized that these misclassified nodes are the ones that have a heterogeneous neighborhood. 
% $$\text{Sum Aggregation} = \begin{bmatrix} 1059 & 599 \\ 96 & 954 \end{bmatrix}, \quad \text{Vote Aggregation} = \begin{bmatrix} 1445 & 213 \\ 183 & 867 \end{bmatrix}$$
To verify this, we analyzed the "clean local homophily" individually for the TP, TN, and FP nodes. Clean local homophily \cite{luan2024heterophilicgraphlearninghandbook} is the fraction of a node's neighbors that share its true, uncorrupted class label. The results for the Cora dataset SLN are shown in Figure \ref{fig:homo_diff}. The empirical results strongly support our hypothesis. Across all evaluated noise levels, the False Positives exhibit markedly lower local homophily than both True Negatives and True Positives. This demonstrates that LENS struggles specifically at class boundaries, where natural heterogeneity mimics the structural signature of label noise.

To prevent the aggressive filtering of valid boundary information, we introduce an additional verification step. In our refined approach, a node identified as noisy by the initial aggregation is considered truly noisy only if its NLD entropy is below a predefined threshold, \textcolor{black}{The entropy threshold was tuned on validation set; we used $t=1.2$ for Cora, Citeseer, Pubmed and Amazon Photo datasets, $t=1$, for DBLP dataset, and $t=0.9$ for Amazon ratings and Roman empire datasets.}.

\begin{figure}
    \centering
    \includegraphics[width=1\linewidth, height=4cm]{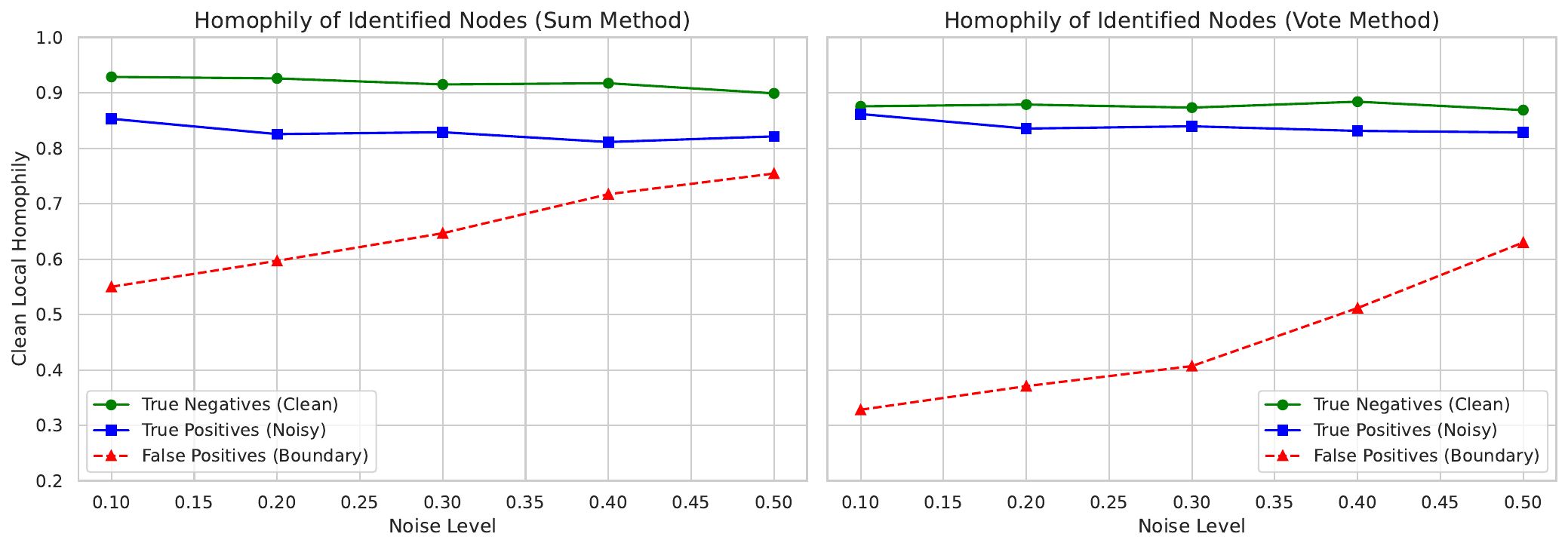}
    \caption{Clean local homophily of TP, TN, and FP nodes across varying noise levels on the Cora dataset. FPs consistently exhibit markedly lower homophily compared to correctly classified nodes.}
    \label{fig:homo_diff}
\end{figure}

\subsection{Pseudo Labeling}
We observed the effectiveness of LENS in distinguishing noisy from clean nodes. To calculate the NLD in Section \ref{sec:identify_noisy}, labels for all nodes are required, which are not available in a realistic scenario. To overcome this, we use a simple pseudo-labeling block to predict labels for the test and validation nodes (See Pseudo Label generation block in Figure \ref{fig:algo_lens}). We train a GCN on $D_{train}$ and use its prediction on test and validation as the label for the same. These labels are used only for calculating NLD in the next step. The \emph{final} end-to-end framework for LENS is in Figure \ref{fig:algo_lens}.

\begin{figure}[!h]
    \centering
    \includegraphics[width=1\linewidth]{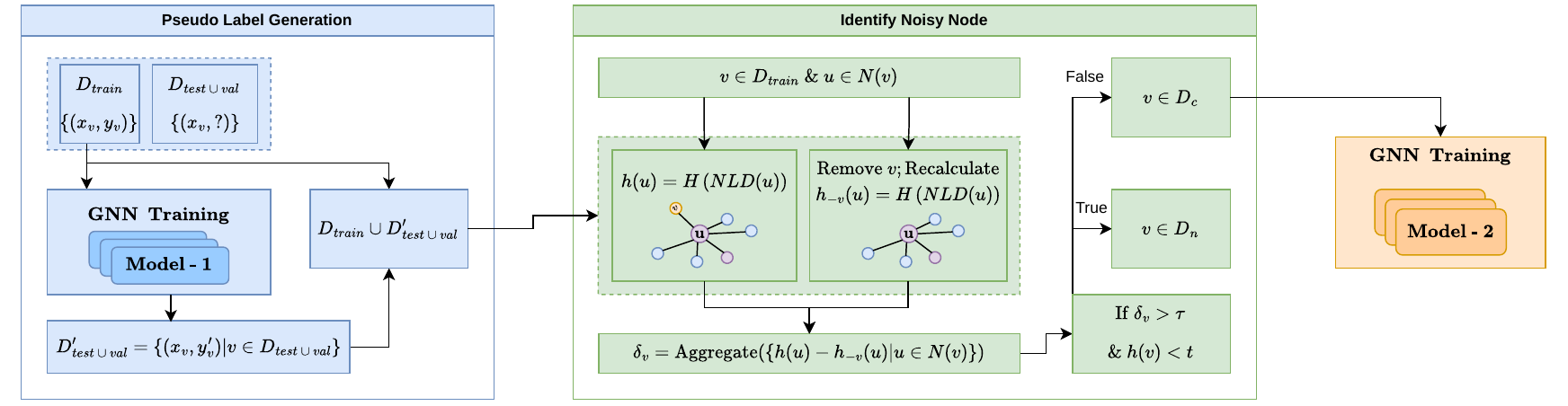}
    \caption{Workflow of the LENS framework. A pseudo-label generation block first generates labels for the test and validation sets. These pseudo-labels are subsequently used to compute the NLD entropy, enabling the identification of clean and noisy nodes by measuring the entropy reduction when a specific node is removed. A GCN is finally trained on clean nodes.}
    \label{fig:algo_lens}
\end{figure}

  % \textcolor{blue}{large white space between Fig 3 and Table 2}

\section{Evaluation of LENS for Node Classification on
Noisy Labeled Graph}
\subsection{Setup and Datasets}
\textbf{Noise Models and datasets used:} We use SLN (Section \ref{sec:sln}) and PWN (Section \ref{sec:pairwise_noise}) \citep{tut_nh,tripathi2019cost,ghosh2015making} models to inject noise in training data. We want to note that our algorithm does not use or estimate the noise model or noise level.We test our algorithm on benchmark datasets, namely, Cora\citep{Yang2016RevisitingSL}, Citeseer\citep{Yang2016RevisitingSL}, Amazon Photo\citep{shchur2018pitfalls}. All datasets are fetched from PyG library \cite{Fey/Lenssen/2019}.

\textbf{Implementation Details: }
In LENS, we employ two GNN models (see Fig. \ref{fig:algo_lens}). LENS can support any combination of GNN architectures. For fair comparison with baselines, we select both Model-1 and Model-2 as GCN \citep{kipf2016semi} with a single hidden layer having a dimension of 16. The output dimension matches the number of classes in the dataset. \textcolor{black}{Both models are trained for a maximum of 1000 epochs using early stopping and the Adam optimizer. L2 regularization is set to 5e-4 for Cora and Citeseer, and 1e-5 for Amazon Photo.} 
All experiments, performed on an NVIDIA RTX 4090 GPU (24GB VRAM) are repeated 5 times with random seeds and mean $\pm$ standard deviation are reported. 

\textbf{Baselines: } We compare our method with existing state-of-the-art noise-robust models for node classification: \textbf{GCN} \citep{kipf2016semi}, \textbf{Coteaching+}  \citep{Yu2019HowDD}, \textbf{NRGNN}  \citep{dai2021nrgnn}, \textbf{RTGNN} \citep{Qian2022RobustTO}, \textbf{CP} \citep{CPZhang2020AdversarialLA}, \textbf{CGNN} \citep{cgnnYuan2023LearningOG}, \textbf{CRGNN} \citep{crgnnLi2024ContrastiveLO}, \textbf{RNCGLN} \citep{RncglnZhu2024RobustNC}, \textbf{PIGNN} \citep{pignnDu2021NoiserobustGL}, \textbf{DGNN} \citep{NT2019LearningGN},\textbf{TSS} \citep{TSS}, \textbf{DeGLIF} \cite{DeGLIf}. For details about the methodology adapted by these models, see Appendix \ref{app:relatedworks}. For GCN, we use the implementation from the PyG  \citep{Fey/Lenssen/2019} library. Also, for a fair comparison, we use the implementation of NRGNN,  Coteach+ by \citet{dai2021nrgnn}, the implementation of CP, CGNN, CRGNN, RNCGLN, PIGNN, and DGNN by \citet{wang2024noisygl}, and the implementation of RTGNN \citep{Qian2022RobustTO}, DeGLIF \cite{DeGLIf} and TSS \citep{TSS}. 

\begin{table}[!htpb]
\centering
\caption{Node classification accuracy on the Cora and Amazon Photo datasets under Symmetric Label Noise (SLN) and Pairwise Label Noise (PWN).}
\label{tab:combined_accuracy_side_by_side}
\resizebox{\linewidth}{!}{
\begin{tabular}{ll ccccc cccc}
\toprule
\multirow{2}{*}{\textbf{Dataset}} & \multirow{2}{*}{\textbf{Algorithm}} & \multicolumn{5}{c}{\textbf{Symmetric Label Noise (SLN) (\%)}} & \multicolumn{4}{c}{\textbf{Pairwise Label Noise (PWN) (\%)}} \\
\cmidrule(lr){3-7} \cmidrule(lr){8-11}
& & \textbf{10} & \textbf{20} & \textbf{30} & \textbf{40} & \textbf{50} & \textbf{10} & \textbf{20} & \textbf{30} & \textbf{40}  \\
\midrule
\multirow{13}{*}{Cora} 
& GCN (CE) & $86.1 \pm 0.7$ & $85.2 \pm 1.1$ & $84.3 \pm 0.6$ & $81.3 \pm 1.3$ & $79.3 \pm 1.2$ & $85.1 \pm 0.4$ & $80.6 \pm 0.6$ & $73.8 \pm 1.9$ & $63.1 \pm 3.1$  \\
& Co-teaching + & $81.2 \pm 0.5$ & $79.6 \pm 0.8$ & $78.8 \pm 1.5$ & $76.4 \pm 1.8$ & $70.7 \pm 2.7$ & $78.6 \pm 1.3$ & $78.3 \pm 1.1$ & $74.9 \pm 3.1$ & $65.8 \pm 4.4$ \\
& NRGNN & $83.9 \pm 2.6$ & $82.0 \pm 2.5$ & $79.8 \pm 2.5$ & $76.5 \pm 3.7$ & $72.5 \pm 3.0$ & $84.6 \pm 0.7$ & $82.0 \pm 2.2$ & $78.3 \pm 2.6$ & $69.1 \pm 4.3$\\
& RTGNN & $79.0 \pm 2.7$ & $80.2 \pm 3.6$ & $83.0 \pm 0.4$ & $81.7 \pm 0.9$ & $77.6 \pm 0.7$ & $80.5 \pm 2.0$ & $82.0 \pm 2.4$ & $74.3 \pm 1.3$ & $59.6 \pm 1.5$ \\
& CGNN & $83.1 \pm 2.6$ & $82.2 \pm 3.9$ & $83.2 \pm 2.3$ & $78.6 \pm 4.2$ & $73.0 \pm 8.9$ & $83.6 \pm 2.0$ & $79.3 \pm 3.5$ & $74.5 \pm 5.9$ & $58.2 \pm 10.2$ \\
& RNCGLN & $79.8 \pm 0.8$ & $76.7 \pm 0.8$ & $70.4 \pm 3.4$ & $62.6 \pm 3.7$ & $51.7 \pm 2.9$ & $78.6 \pm 1.3$ & $71.1 \pm 2.0$ & $63.0 \pm 3.5$ & $51.5 \pm 5.1$ \\
& PIGNN & $83.3 \pm 1.9$ & $81.2 \pm 5.5$ & $78.8 \pm 4.2$ & $78.0 \pm 2.2$ & $75.9 \pm 8.1$ & $82.1 \pm 1.8$ & $78.8 \pm 2.7$ & $72.8 \pm 3.6$ & $54.2 \pm 9.7$ \\
& DGNN & $79.0 \pm 3.2$ & $67.9 \pm 11.9$ & $73.2 \pm 5.2$ & $61.9 \pm 16.5$ & $60.0 \pm 7.6$ & $79.4 \pm 2.8$ & $74.3 \pm 3.2$ & $65.5 \pm 6.3$ & $56.0 \pm 5.5$ \\
& DeGLIF & $86.4 \pm 0.4$ & $85.7 \pm 0.5$ & $84.3 \pm 0.9$ & $82.2 \pm 0.8$ & $79.6 \pm 1.7$ & $\mathbf{86.4 \pm 0.3}$ & $\mathbf{84.2 \pm 1.1}$ & $78.1 \pm 0.8$ & $\underline{71.4 \pm 2.8}$\\
& TSS & $86.0 \pm 0.3$ & $85.2 \pm 0.6$ & $83.5 \pm 0.7$ & $81.4 \pm 0.9$ & $80.2 \pm 0.7$ & $85.6 \pm 0.6$ & $83.0 \pm 0.8$ & $78.8 \pm 0.7$ & $68.9 \pm 1.3$ \\
& CP & $81.4 \pm 2.1$ & $79.0 \pm 3.0$ & $75.8 \pm 6.4$ & $70.1 \pm 9.9$ & $65.6 \pm 6.3$ & $81.9 \pm 3.1$ & $74.8 \pm 5.8$ & $66.4 \pm 2.3$ & $56.1 \pm 2.3$\\
& LENS (Sum)    & $\underline{86.6 \pm 0.3}$ & $\mathbf{86.0 \pm 0.6}$ & $\underline{85.3 \pm 0.3}$ & $\mathbf{82.7 \pm 1.1}$ & $\mathbf{81.6 \pm 1.1}$ & $\underline{86.3 \pm 0.7}$ & $\mathbf{84.2 \pm 0.9}$ & $\mathbf{79.2 \pm 1.9}$ & $70.8 \pm 2.7$  \\
& LENS (Vote)   & $\mathbf{86.8 \pm 0.3}$ & $\underline{85.9 \pm 0.6}$ & $\mathbf{85.8 \pm 0.5}$ & $\underline{82.6 \pm 1.2}$ & $\underline{81.3 \pm 0.7}$ & $\underline{86.3 \pm 0.3}$ & $\underline{83.6 \pm 0.9}$ & $\underline{78.9 \pm 1.8}$ & $\mathbf{71.9 \pm 2.3}$ \\
\midrule
\multirow{13}{*}{Citeseer} 
& GCN (CE) & $\underline{77.54 \pm 0.4}$ & $76.28 \pm 0.7$ & $75.1 \pm 0.5$ & $72.4 \pm 0.6$ & $70.66 \pm 1.5$ & $\mathbf{77.7 \pm 0.26}$ & $75.4 \pm 0.71$ & $70.0 \pm 0.77$ & $60.5 \pm 2.09$ \\
& Co-teaching + & $\mathbf{77.8 \pm 5.8}$ & $73.6 \pm 5.5$ & $71.8 \pm 7.2$ & $65.6 \pm 9.2$ & $61.7 \pm 8.3$ & $75.7 \pm 1.0$ & $71.0 \pm 2.1$ & $65.1 \pm 5.7$ & $57.8 \pm 12.1$ \\
& NRGNN & $74.3 \pm 1.0$ & $72.2 \pm 1.2$ & $71.4 \pm 1.7$ & $69.5 \pm 1.2$ & $67.9 \pm 1.5$ & $76.1 \pm 1.7$ & $73.0 \pm 2.1$ & $68.6 \pm 3.4$ & $57.6 \pm 5.1$ \\
& RTGNN & $76.5 \pm 0.5$ & $\underline{76.4 \pm 1.2}$ & $\mathbf{76.3 \pm 0.7}$ & $73.2 \pm 0.9$ & $\underline{72.7 \pm 0.6}$ & $77.3 \pm 0.3$ & $75.8 \pm 1.2$ & $71.3 \pm 1.8$ & $\underline{67.9 \pm 1.0}$ \\
& CGNN & $73.9 \pm 4.6$ & $71.4 \pm 7.2$ & $71.6 \pm 4.7$ & $68.8 \pm 8.1$ & $64.3 \pm 2.8$ & $77.0 \pm 1.3$ & $69.3 \pm 3.7$ & $65.8 \pm 4.7$ & $55.1 \pm 6.1$ \\
& RNCGLN & $69.4 \pm 2.6$ & $66.1 \pm 3.8$ & $63.8 \pm 4.7$ & $57.8 \pm 4.9$ & $47.4 \pm 2.6$ & $68.3 \pm 3.2$ & $62.4 \pm 3.8$ & $55.8 \pm 3.2$ & $47.6 \pm 3.7$ \\
& PIGNN & $73.2 \pm 5.3$ & $70.8 \pm 4.8$ & $71.0 \pm 4.7$ & $67.4 \pm 4.1$ & $60.5 \pm 7.4$ & $72.9 \pm 2.7$ & $68.8 \pm 4.4$ & $62.7 \pm 5.2$ & $51.3 \pm 6.6$ \\
& DGNN & $62.1 \pm 2.9$ & $56.1 \pm 3.8$ & $46.9 \pm 8.9$ & $38.7 \pm 8.9$ & $33.3 \pm 8.0$ & $60.9 \pm 4.7$ & $52.8 \pm 4.4$ & $49.3 \pm 8.3$ & $36.8 \pm 7.3$ \\
& DeGLIF & $77.0 \pm 1.3$ & $\mathbf{76.7 \pm 0.9}$ & $\underline{75.8 \pm 0.8}$ & $\mathbf{74.5 \pm 0.5}$ & $\mathbf{72.8 \pm 0.4}$ & $\underline{77.5 \pm 0.8}$ & $\underline{76.4 \pm 1.3}$ & $73.6 \pm 1.1$ & $\mathbf{68.9 \pm 1.8}$ \\
& TSS & $77.1 \pm 0.5$ & $\mathbf{76.7 \pm 0.4}$ & $75.2 \pm 0.9$ & $\underline{73.9 \pm 1.2}$ & $71.1 \pm 1.0$ & $76.8 \pm 0.5$ & $\mathbf{76.7 \pm 0.6}$ & $74.0 \pm 1.0$ & $64.7 \pm 1.0$ \\
& CP & $72.9 \pm 4.6$ & $74.5 \pm 3.5$ & $69.9 \pm 7.7$ & $69.1 \pm 12.3$ & $63.4 \pm 14.1$ & $73.9 \pm 3.9$ & $66.2 \pm 4.5$ & $59.4 \pm 6.5$ & $49.4 \pm 6.9$ \\
& LENS (Sum) & $76.9 \pm 1.1$ & $75.7 \pm 0.7$ & $74.7 \pm 1.1$ & $73.7 \pm 0.6$ & $71.6 \pm 1.2$ & $76.8 \pm 1$ & $75.8 \pm 0.7$ & $\mathbf{74.3 \pm 1.1}$ & $65.7 \pm 2.5$ \\
& LENS (Vote) & $77.3 \pm 1$ & $\underline{76.4 \pm 0.7}$ & $75.2 \pm 0.8$ & $\mathbf{74.5 \pm 0.9}$ & $72.0 \pm 1.2$ & $77.3 \pm 0.6$ & $76.0 \pm 0.9$ & $\underline{74.1 \pm 1.1}$ & $65.7 \pm 2.2$ \\
\midrule
\multirow{13}{*}{Amazon Photo} 
& GCN (CE)& $\underline{91.1 \pm 0.9}$ & $90.1 \pm 0.7$ & $87.6 \pm 0.7$ & $85.7 \pm 2.0$ & $81.9 \pm 2.0$ & $90.8 \pm 0.9$ & $85.2 \pm 1.3$ & $79.0 \pm 1.7$ & $67.5 \pm 3.6$  \\
& Co-teaching + & $82.2 \pm 1.6$ & $80.4 \pm 1.3$ & $73.7 \pm 1.8$ & $59.2 \pm 2.4$ & $47.9 \pm 2.0$ & $88.6 \pm 3.3$ & $84.1 \pm 5.8$ & $78.9 \pm 2.7$ & $72.9 \pm 3.1$ \\
& NRGNN & $68.1 \pm 6.8$ & $66.5 \pm 5.1$ & $60.0 \pm 7.1$ & $60.0 \pm 5.1$ & $51.5 \pm 5.9$ & $65.6 \pm 2.7$ & $67.3 \pm 5.1$ & $65.7 \pm 6.0$ & $52.6 \pm 12.2$\\
& RTGNN &  $82.2 \pm 5.3$ & \textcolor{black}{$84.0 \pm 2.2$} & \textcolor{black}{$78.8 \pm 6.4$} & \textcolor{black}{$83.5 \pm 4.7$} & \textcolor{black}{$79.9 \pm 5.0$}& $86.6 \pm 0.7$ & $87.1 \pm 1.3$ & $77.6 \pm 1.1$ & $75.2 \pm 6.4$\\
& CGNN & $66.4 \pm 17.6$ & $69.6 \pm 22.8$ & $54.2 \pm 26.4$ & $43.0 \pm 26.4$ & $40.3 \pm 26.7$ & $61.5 \pm 20.2$ & $58.2 \pm 20.1$ & $49.3 \pm 25.6$ & $41.4 \pm 23.3$ \\
& RNCGLN & $84.7 \pm 1.7$ & $82.7 \pm 3.3$ & $74.1 \pm 8.3$ & $64.2 \pm 5.6$ & $46.6 \pm 4.5$ & $84.6 \pm 2.8$ & $80.7 \pm 3.1$ & $71.2 \pm 4.5$ & $55.7 \pm 11.2$ \\
& PIGNN & $90.1 \pm 1.7$ & $88.1 \pm 1.8$ & $87.6 \pm 0.7$ & $82.5 \pm 5.2$ & $76.9 \pm 4.3$ & $89.8 \pm 0.8$ & $86.4 \pm 1.6$ & $80.1 \pm 5.1$ & $65.7 \pm 8.9$ \\
& DGNN & $65.3 \pm 23.5$ & $54.4 \pm 17.5$ & $57.2 \pm 15.4$ & $48.1 \pm 14.9$ & $44.1 \pm 17.1$ & $63.7 \pm 23.5$ & $60.0 \pm 16.6$ & $50.4 \pm 19.5$ & $48.5 \pm 9.5$ \\
& DeGLIF & $\mathbf{91.6 \pm 0.2}$ & $\underline{90.3 \pm 1.0}$ & $89.8 \pm 2.0$ & $87.1 \pm 1.6$ & $82.6 \pm 4.2$ & $89.2 \pm 1.3$ & $87.4 \pm 0.6$ & $83.2 \pm 2.1$ & $74.5 \pm 7.6$\\
& TSS           & $85.3 \pm 3.5$ & $86.8 \pm 2.1$ & $86.3 \pm 1.3$ & $83.6 \pm 2.0$ & $78.9 \pm 4.8$ & $83.6 \pm 4.1$ & $85.9 \pm 3.0$ & $80.6 \pm 4.9$ & $67.1 \pm 9.7$  \\
& CP & $90.6 \pm 0.7$ & $89.9 \pm 0.5$ & $\underline{89.9 \pm 1.1}$ & $\underline{87.3 \pm 2.8}$ & $83.0 \pm 0.9$ & $\mathbf{91.3 \pm 0.6}$ & $88.8 \pm 2.1$ & $80.8 \pm 4.5$ & $68.9 \pm 5.9$ \\
& LENS (Sum)    & $\underline{91.1 \pm 0.7}$ & $\underline{90.3 \pm 1.2}$ & $\mathbf{90.4 \pm 1.1}$ & $\mathbf{89.0 \pm 1.0}$ & $\mathbf{87.4 \pm 1.9}$ & $\underline{91.1 \pm 0.8}$ & $\underline{89.2 \pm 1.1}$ & $\underline{86.3 \pm 3.2}$ & $\mathbf{77.8 \pm 4.4}$  \\
& LENS (Vote)   & $\mathbf{91.6 \pm 0.4}$ & $\mathbf{91.1 \pm 0.7}$ & $\mathbf{90.4 \pm 0.7}$ & $\mathbf{89.0 \pm 1.0}$ & $\underline{86.7 \pm 1.7}$ & $91.0 \pm 1.3$ & $\mathbf{89.8 \pm 1.2}$ & $\mathbf{86.5 \pm 2.7}$ & $\underline{77.7 \pm 3.9}$  \\
\bottomrule
\end{tabular}
}
\end{table}
\subsection{Performance of LENS on Node Classification in Presence of Label Noise}

We compare LENS with SoTA algorithms for noise-robust learning for graph data. Results are presented in Table \ref{tab:combined_accuracy_side_by_side}. We observe that both LENS variants outperform SOTA noise-robust algorithms. To evaluate performance across all 27 benchmark instances (3 datasets $\times$ 9 noise conditions), we conducted the average ranked test and the Wilcoxon signed-rank test. Proposed LENS(vote) achieved the best overall rank of 2.13, followed by LENS(sum) (2.72), DeGLIF (2.93), and TSS (4.96). Pairwise Wilcoxon signed-rank tests confirm that LENS(vote) achieves statistically significant performance gains over key baselines: {1. vs. DeGLIF}: $p = 0.0206$, achieving a $+0.67\%$ mean accuracy gain. \textbf{2. vs. TSS}: $p = 0.000037$, achieving a $+2.54\%$ mean accuracy gain.

We also observe that the computation time for LENS is much lower than that of other noise-robust algorithms, we detail this in next section.

\subsection{Computational Time Comparison }
To evaluate the practical scalability of the proposed method, we present a comparative runtime analysis against established noise-robust GNN baselines. Table \ref{tab:execution_time_cora} summarizes the execution time required by each algorithm on the Cora dataset. \textcolor{black}{The recorded execution times represent the complete end-to-end pipeline for every algorithm, from receiving the injected noisy graph data to the final model prediction.} Times are averaged over 5 independent runs to minimize variance.

\begin{table}[htpb]
\centering
\caption{Execution time comparison of noise-robust algorithms for Cora dataset on 24GB RTX 4090}
\label{tab:execution_time_cora}
\resizebox{\linewidth}{!}{
\begin{tabular}{lc @{\hspace{2em}} lc @{\hspace{2em}} lc  @{\hspace{2em}} lc}
\toprule
\textbf{Algorithm} & \textbf{Time (s)} & \textbf{Algorithm} & \textbf{Time (s)} & \textbf{Algorithm} & \textbf{Time (s)} & 
\textbf{Algorithm} & \textbf{Time (s)} \\
\midrule
NRGNN  & 15.75 & CRGNN & 5.40  & RTGNN  & 25.24 & RNCGLN & 14.69 \\ CP    & 12.57 & PIGNN  & 7.53  &
CGNN   & 11.50 & DGNN  & 4.53  \\ DeGLIF & 68.03 &
TSS    &     3.21  & \textbf{LENS}  & 0.25  &        &    \\  
\bottomrule
\end{tabular}}
\end{table}

LENS achieves a total execution time of just 0.25 seconds, demonstrating superior computational efficiency over all comparative methods. Even when evaluated against the fastest baselines in the benchmark, such as TSS (3.21 s), DGNN (4.53 s), and CRGNN (5.40 s), LENS provides an order-of-magnitude speedup, executing more than 12 times faster than the second-fastest approach, and significantly faster than DeGLIF (second-best performing algorithm as per ranked test). These results confirm that LENS achieves robust performance while maintaining minimal computational overhead, making it exceptionally well-suited for resource-constrained or real-time graph learning pipelines.

\subsection{Relabeling Function}

While training a GCN exclusively on the clean node set $D_c$ demonstrates noise robustness, entirely discarding the identified noisy nodes may be suboptimal. Simply throwing away these noisy nodes may lead to a significant loss of valuable feature information within the graph. Instead, we attempt to relabel them to recover this data and improve overall model performance.

The relabeling mechanism is detailed in the Figure \ref{fig:algo_lens}. For each identified noisy node $v \in D_n$, we generate a label prediction $p_v$ using a GCN trained on $D_c$ (Model-2). To determine the appropriate new label, we evaluate the condition $\text{argmax}(p_v) \neq y_v$, where $y_v$ represents the original noisy label. Depending on the outcome of this check, the new label $y_v^*$ is assigned as either the primary prediction $\text{argmax}(p_v)$ or the second-highest prediction $2^{nd} \text{argmax}(p_v)$. This process generates a corrected subset of nodes, denoted as $D_n^*$. Finally, the original clean nodes and the newly relabeled nodes are merged to form the augmented dataset $D^* = D_c \cup D_n^*$. A comparison of LENS with and without relabeling for the Cora dataset is presented in the Table \ref{tab:relabeling_comparison}. We see that relabelling offers comparable results at low noise levels, but at high noise levels, it significantly improves the accuracy.

\begin{table}[htpb]
\centering
\caption{Comparison of LENS aggregation methods (Sum $\&$ Vote) with and without relabeling on the Cora dataset under Symmetric Label Noise (SLN) and Pairwise Label Noise (PWN).}
\label{tab:relabeling_comparison}
\resizebox{\linewidth}{!}{
\begin{tabular}{lllccccc}
\toprule
\multirow{2}{*}{\textbf{Noise Model}} & \multirow{2}{*}{\textbf{Method}} & \multirow{2}{*}{\textbf{Setup}} & \multicolumn{5}{c}{\textbf{Initial Noise Level (\%)}} \\
\cmidrule(lr){4-8}
& & & \textbf{10} & \textbf{20} & \textbf{30} & \textbf{40} & \textbf{50} \\
\midrule
\multirow{4}{*}{SLN} 
& \multirow{2}{*}{LENS (Sum)} 
& w/o Relabeling  & $86.64 \pm 0.31$ & $85.96 \pm 0.60$ & $85.30 \pm 0.29$ & $82.68 \pm 1.10$ & $81.62 \pm 1.05$ \\
& & with Relabeling & $86.48 \pm 0.63$ & $85.62 \pm 0.84$ & $85.88 \pm 0.52$ & $83.96 \pm 0.85$ & $82.64 \pm 0.45$ \\
\cmidrule{2-8}
& \multirow{2}{*}{LENS (Vote)} 
& w/o Relabeling  & $86.82 \pm 0.34$ & $85.92 \pm 0.60$ & $85.76 \pm 0.53$ & $82.64 \pm 1.16$ & $81.26 \pm 0.72$ \\
& & with Relabeling & $86.62 \pm 0.57$ & $85.94 \pm 0.42$ & $85.22 \pm 0.48$ & $83.50 \pm 0.88$ & $82.20 \pm 0.54$ \\
\midrule
\multirow{4}{*}{PWN} 
& \multirow{2}{*}{LENS (Sum)} 
& w/o Relabeling  & $86.30 \pm 0.67$ & $84.22 \pm 0.94$ & $79.18 \pm 1.91$ & $70.78 \pm 2.66$ & - \\
& & with Relabeling & $85.76 \pm 0.48$ & $84.06 \pm 0.64$ & $81.18 \pm 0.83$ & $74.56 \pm 3.83$ & - \\
\cmidrule{2-8}
& \multirow{2}{*}{LENS (Vote)} 
& w/o Relabeling  & $86.26 \pm 0.31$ & $83.56 \pm 0.89$ & $78.86 \pm 1.78$ & $71.86 \pm 2.32$ & - \\
& & with Relabeling & $86.02 \pm 0.31$ & $84.36 \pm 0.48$ & $80.74 \pm 1.46$ & $72.84 \pm 5.06$ & - \\
\bottomrule
\end{tabular}
}
\end{table}

% \textbf{Remark:} During our research, we also explored a different relabeling criterion. We attempted to assign a label to a node that maximally reduces the entropy of its neighborhood. However, we observed that the improvement yielded by that approach was limited.

\subsection{Sensitivity Analysis of Thresholds $\tau_s$ and $\tau_v$}
We tested various values for $\tau_s \in (-0.1, 1)$ and $\tau_v \in (0.3, 0.8)$. We observed that no values consistently outperformed $\tau_s=0$ and $\tau_v=0.5$; we chose to use these settings. This behavior is explained by Figure \ref{fig:met_pw}, particularly for the Vote variant, as the proportion of neighbors experiencing entropy reduction is heavily concentrated around 0 or 1. Also, in the Sum variant, the sum of entropy differences for most clean nodes is less than 0.

\subsection{Experiments on Additional Datasets}
{\color{black}
We also evaluate LENS on PubMed \citep{Yang2016RevisitingSL}, the large DBLP graph \cite{Bojchevski2017DeepGE}, and two heterophilic datasets: Amazon Ratings \cite{Platonov2023ACL} and Roman Empire \cite{Platonov2023ACL}. We apply the data splits from \cite{wang2024noisygl} for DBLP and Amazon Ratings, and the splits from \cite{DeGLIf} for PubMed. To ensure a rigorous comparison on heterophilic graph, we adopt the experimental setup from \cite{DeGLIf} for the Roman Empire dataset, which utilizes the DirGNNConv architecture \cite{Rossi2023EdgeDI}. We compare LENS against the respective baseline architecture (GCN or DirGNNConv) alongside the two top-performing noise-robust algorithms for each dataset, as identified in \cite{wang2024noisygl} and \cite{DeGLIf}.

\begin{table}[!htpb]
\centering
\caption{Node classification accuracy on the Pubmed, DBLP, Amazon Ratings, and Roman Empire datasets under Symmetric Label Noise (SLN) and Pairwise Label Noise (PWN).}
\label{tab:pubmed_accuracy_side_by_side}
\resizebox{\linewidth}{!}{
\begin{tabular}{ll ccccc cccc}
\toprule
\multirow{2}{*}{\textbf{Dataset}} & \multirow{2}{*}{\textbf{Algorithm}} & \multicolumn{5}{c}{\textbf{Symmetric Label Noise (SLN) (\%)}} & \multicolumn{4}{c}{\textbf{Pairwise Label Noise (PWN) (\%)}} \\
\cmidrule(lr){3-7} \cmidrule(lr){8-11}
& & \textbf{10} & \textbf{20} & \textbf{30} & \textbf{40} & \textbf{50} & \textbf{10} & \textbf{20} & \textbf{30} & \textbf{40} \\
\midrule
\multirow{5}{*}{Pubmed} 
& GCN & $85.0 \pm 0.3$ & $84.9 \pm 0.3$ & $84.5 \pm 0.5$ & $83.9 \pm 0.1$ & $81.8 \pm 0.4$ & $85.6 \pm 0.1$ & $86.1 \pm 0.0$ & $83.9 \pm 0.4$ & $72.9 \pm 1.3$ \\
& CRGNN & $87.3 \pm 0.5$ & $87.2 \pm 0.6$ & $85.9 \pm 0.4$ & $85.0 \pm 0.9$ & $45.5 \pm 3.4$ & $87.1 \pm 0.6$ & $85.9 \pm 0.4$ & $83.1 \pm 0.9$ & $76.6 \pm 2.1$ \\
& DeGLIF & $86.6 \pm 0.4$ & $86.1 \pm 0.5$ & $85.6 \pm 0.5$ & $85.2 \pm 0.9$ & $84.5 \pm 0.9$ & $87.3 \pm 0.4$ & $86.1 \pm 0.4$ & $84.9 \pm 0.6$ & $81.2 \pm 1.3$ \\
& LENS (Sum) & $86.5 \pm 0.3$ & $85.8 \pm 0.3$ & $85.3 \pm 0.8$ & $84.6 \pm 0.9$ & $83.3 \pm 0.6$ & $86.6 \pm 0.4$ & $86.5 \pm 0.4$ & $85.5 \pm 0.9$ & $81.8 \pm 1.4$ \\
& LENS (Vote) & $87.2 \pm 0.4$ & $86.7 \pm 0.3$ & $86.0 \pm 0.7$ & $85.2 \pm 0.9$ & $84.2 \pm 0.9$ & $87.5 \pm 0.3$ & $87.0 \pm 0.4$ & $86.1 \pm 0.5$ & $80.3 \pm 2.8$ \\
\midrule
\multirow{5}{*}{DBLP} 
& GCN & $75.2 \pm 1.0$ & $72.4 \pm 3.1$ & $69.7 \pm 2.7$ & $64.5 \pm 5.6$ & $57.1 \pm 7.9$ & $74.0 \pm 1.6$ & $70.1 \pm 1.4$ & $62.6 \pm 2.4$ & $52.2 \pm 7.9$ \\
& NRGNN & $79.4 \pm 1.7$ & $77.4 \pm 2.8$ & $76.2 \pm 1.8$ & $70.4 \pm 5.6$ & $65.2 \pm 6.9$ & $79.6 \pm 1.1$ & $74.9 \pm 2.8$ & $69.8 \pm 3.2$ & $61.0 \pm 8.1$ \\
& PIGNN & $76.3 \pm 2.3$ & $74.2 \pm 4.3$ & $72.0 \pm 2.2$ & $67.8 \pm 3.1$ & $58.9 \pm 7.8$ & $76.9 \pm 2.4$ & $74.9 \pm 2.0$ & $69.3 \pm 2.6$ & $59.5 \pm 8.8$ \\
& LENS (Sum) & $77.3 \pm 0.9$ & $75.1 \pm 1.5$ & $72.9 \pm 4.5$ & $70.2 \pm 2.4$ & $61.7 \pm 5.3$ & $77.8 \pm 1.1$ & $76.5 \pm 1.4$ & $69.9 \pm 3.0$ & $62.7 \pm 3.7$ \\
& LENS (Vote) & $76.1 \pm 0.4$ & $74.3 \pm 1.9$ & $72.3 \pm 4.2$ & $68.5 \pm 2.6$ & $63.2 \pm 6.0$ & $77.2 \pm 1.7$ & $74.6 \pm 2.1$ & $68.3 \pm 2.8$ & $61.6 \pm 4.5$ \\
\midrule
\multirow{5}{*}{Amazon Ratings} 
& GCN & $39.0 \pm 0.6$ & $38.5 \pm 0.5$ & $37.7 \pm 0.9$ & $36.9 \pm 1.4$ & $36.8 \pm 0.8$ & $39.3 \pm 0.5$ & $39.1 \pm 0.5$ & $38.4 \pm 0.7$ & $36.7 \pm 1.4$ \\
& NRGNN & $37.3 \pm 0.3$ & $37.0 \pm 0.3$ & $36.9 \pm 0.4$ & $36.4 \pm 0.6$ & $36.8 \pm 0.2$ & $37.3 \pm 0.6$ & $37.9 \pm 0.7$ & $37.4 \pm 0.5$ & $36.7 \pm 1.4$ \\
& CP & $39.0 \pm 0.9$ & $38.8 \pm 0.4$ & $37.6 \pm 1.0$ & $37.1 \pm 1.0$ & $36.8 \pm 0.7$ & $39.7 \pm 0.3$ & $38.8 \pm 1.1$ & $38.6 \pm 0.7$ & $36.8 \pm 1.5$ \\
& LENS (Sum) & $42.2 \pm 0.2$ & $42.1 \pm 0.3$ & $41.9 \pm 0.3$ & $41.0 \pm 0.5$ & $40.3 \pm 0.5$ & $42.2 \pm 0.4$ & $41.6 \pm 0.5$ & $38.9 \pm 0.7$ & $36.9 \pm 0.2$ \\
& LENS (Vote) & $42.3 \pm 0.2$ & $42.2 \pm 0.4$ & $42.0 \pm 0.3$ & $41.0 \pm 0.6$ & $40.2 \pm 0.6$ & $42.3 \pm 0.4$ & $41.5 \pm 0.4$ & $38.9 \pm 0.7$ & $36.8 \pm 0.1$ \\
\midrule
\multirow{4}{*}{Roman Empire} 
& DirGNNConv & $69.1 \pm 0.5$ & $68.1 \pm 0.4$ & $67.5 \pm 0.3$ & $65.7 \pm 0.5$ & $64.8 \pm 0.4$ & $69.4 \pm 0.8$ & $68.6 \pm 0.8$ & $66.9 \pm 0.4$ & $59.9 \pm 0.6$ \\
& DeGLIF & $71.2 \pm 0.6$ & $70.6 \pm 0.2$ & $69.8 \pm 0.6$ & $68.6 \pm 0.7$ & $66.7 \pm 0.2$ & $71.1 \pm 0.5$ & $69.8 \pm 0.5$ & $67.4 \pm 0.2$ & $62.2 \pm 0.5$ \\
& LENS (Sum) & $72.6 \pm 0.8$ & $71.6 \pm 1.0$ & $70.3 \pm 0.5$ & $69.3 \pm 0.5$ & $67.3 \pm 1.1$ & $72.2 \pm 0.9$ & $70.7 \pm 0.9$ & $68.8 \pm 0.5$ & $62.2 \pm 1.3$ \\
& LENS (Vote) & $72.5 \pm 0.5$ & $71.5 \pm 0.7$ & $70.2 \pm 0.5$ & $69.2 \pm 0.3$ & $67.4 \pm 1.0$ & $72.4 \pm 0.7$ & $70.9 \pm 0.6$ & $68.8 \pm 0.7$ & $62.0 \pm 1.3$ \\
\bottomrule
\end{tabular}
}
\end{table}

LENS outperforms the evaluated methods on the Roman Empire and Amazon Ratings datasets and achieves comparable performance on PubMed. For the DBLP dataset, as noted in \cite{wang2024noisygl} NRGNN outperforms GCN by roughly 4\% even without label noise, an advantage that persists across all noise levels. By design, the accuracy of LENS is upper-bounded by GCN's performance on clean data. Despite this limitation against NRGNN, LENS still outperforms all remaining algorithms on the DBLP dataset.

\subsection{Experiments using other GNN architectures}
To ensure a fair comparison with other baselines, we initially used GCN for both Model-1 and Model-2. In this section, we evaluate the performance of LENS when both models are replaced by either GAT \cite{gatVelickovic2017GraphAN} or GraphSAGE \cite{sage}. For GAT, we used one hidden layer with 8 heads of size 8 and ReLU activation. For GraphSAGE, we used one hidden layer of size 64 with ReLU activation. The results for the Cora dataset are shown in Table \ref{tab:cora_accuracy_transposed}.

\begin{table}[!htpb]
\centering
\caption{Node classification accuracy on the Cora dataset for different backbone models under Symmetric Label Noise (SLN) and Pairwise Label Noise (PWN).}
\label{tab:cora_accuracy_transposed}
\resizebox{\linewidth}{!}{
\begin{tabular}{lc ccc ccc}
\toprule
\multirow{2}{*}{\textbf{Noise Type}} & \multirow{2}{*}{\textbf{Noise Level}} & \multicolumn{3}{c}{\textbf{GraphSAGE}} & \multicolumn{3}{c}{\textbf{GAT}} \\
\cmidrule(lr){3-5} \cmidrule(lr){6-8}
& & \textbf{Baseline GraphSAGE} & \textbf{NLD Clean (Sum)} & \textbf{NLD Clean (Vote)} & \textbf{Baseline GAT} & \textbf{NLD Clean (Sum)} & \textbf{NLD Clean (Vote)} \\
\midrule
\multirow{7}{*}{SLN} 
& 10 & $85.8 \pm 0.6$ & $86.5 \pm 0.1$ & $86.0 \pm 0.4$ & $84.7 \pm 0.5$ & $85.1 \pm 0.6$ & $85.3 \pm 0.7$ \\
& 20 & $85.0 \pm 1.0$ & $85.9 \pm 0.4$ & $85.8 \pm 0.4$ & $83.3 \pm 0.9$ & $84.3 \pm 0.6$ & $83.7 \pm 0.7$ \\
& 30 & $82.6 \pm 1.7$ & $84.4 \pm 0.7$ & $84.4 \pm 0.9$ & $83.8 \pm 0.8$ & $84.8 \pm 1.0$ & $84.3 \pm 1.0$ \\
& 40 & $79.4 \pm 1.4$ & $82.1 \pm 1.1$ & $82.9 \pm 1.6$ & $81.8 \pm 0.8$ & $82.2 \pm 1.5$ & $82.4 \pm 1.5$ \\
& 50 & $74.4 \pm 1.5$ & $79.0 \pm 1.1$ & $79.5 \pm 1.0$ & $77.0 \pm 1.4$ & $80.3 \pm 1.2$ & $80.5 \pm 1.5$ \\
& 60 & $68.7 \pm 2.4$ & $73.8 \pm 2.4$ & $74.8 \pm 1.8$ & $73.3 \pm 1.3$ & $77.0 \pm 0.8$ & $77.7 \pm 0.6$ \\
& 70 & $54.9 \pm 1.8$ & $61.2 \pm 1.0$ & $63.5 \pm 1.3$ & $56.8 \pm 2.8$ & $68.2 \pm 2.6$ & $67.3 \pm 2.6$ \\
\midrule
\multirow{4}{*}{PWN} 
& 10 & $84.8 \pm 0.7$ & $86.3 \pm 0.4$ & $86.3 \pm 0.4$ & $84.2 \pm 0.6$ & $84.8 \pm 0.9$ & $84.7 \pm 0.7$ \\
& 20 & $82.5 \pm 0.7$ & $85.3 \pm 0.8$ & $85.1 \pm 0.8$ & $81.9 \pm 0.9$ & $83.3 \pm 1.3$ & $83.7 \pm 1.5$ \\
& 30 & $74.2 \pm 1.1$ & $80.6 \pm 1.2$ & $80.0 \pm 1.3$ & $77.5 \pm 1.3$ & $80.1 \pm 0.4$ & $80.1 \pm 1.8$ \\
& 40 & $63.3 \pm 2.5$ & $69.6 \pm 3.2$ & $70.8 \pm 3.0$ & $68.4 \pm 2.6$ & $72.2 \pm 2.5$ & $72.8 \pm 2.8$ \\
\bottomrule
\end{tabular}
}
\end{table}

We observe that LENS performs well even with backbone architectures other than GCN.

}

\section{Impact of Label Noise on Interclass Distinguishability}

The NLD discussed in Section \ref{sec:nld_main} and \ref{sec:nld_res} is a node-specific quantity. Recent work \cite{Zhao2026NodeCI} extends it to a graph-level quantity, called interclass distinguishability, which measures the degree of difference between NLDs of different labels. \cite{Zhao2026NodeCI} also argues that GCNs learn well on graphs with high interclass distinguishability. Hence, it is relevant to analyze changes in interclass distinguishability in the presence of label noise. To transition from a node-level NLD to a graph-level metric, \cite{Zhao2026NodeCI} computes the neighborhood distribution pattern $c_k$ for class $k \in [K]$ by averaging the NLD vectors of all nodes belonging to that specific class $k$. These $c_k$ are then stacked as columns to construct a $K \times K$ matrix $C$, the Class NLD matrix. The entropy of the normalized singular values of this matrix $C$, which is a continuous valued proxy of the rank of this matrix, is defined as the interclass distinguishability; more details are in the Appendix \ref{app:interclassdist}. 

\begin{restatable}{theorem}{distchange}
\label{thm:nodedist}
Let $C$ be the Class NLD matrix of the balanced, clean $K$-class graph, and let $C$ be doubly stochastic. Let $\tilde{C} := Q_{\text{sln}}^\top C Q_{\text{sln}}$ denote the population-level expected Class NLD matrix under Symmetric Label Noise (SLN) with noise rate $0 < p < \frac{K-1}{K}$. Let $InterClass(\cdot)$ denote interclass distinguishability, which is defined as the entropy of the normalized singular values of a matrix. Then,  \\
$\textcolor{black}{InterClass(\tilde{C})} \leq InterClass(C).$
\end{restatable}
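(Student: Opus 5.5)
The plan is to compute the singular values of $\tilde C$ explicitly in terms of those of $C$, and then reduce the entropy comparison to a majorization statement.

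\textbf{Step 1: simplify the noise matrix and $\tilde C$.} Write $J=\mathbf{1}\mathbf{1}^\top$ and set $\alpha:=1-\frac{pK}{K-1}$, exactly as in the proof of Corollary \ref{cor:sln_entropy}. The hypothesis $0<p<\frac{K-1}{K}$ gives $\alpha\in(0,1)$. Then
\[
Q_{sln}=\alpha I+(1-\alpha)\tfrac{1}{K}J ,
\]
and this matrix is symmetric. Because $C$ is doubly stochastic, $CJ=JC=J$; also $J^2=KJ$. Expanding the product then gives
\[
\tilde C=\alpha^2 C+(1-\alpha^2)\tfrac{1}{K}J .
\]

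\textbf{Step 2: singular values of $\tilde C$.} Let $u=\mathbf{1}/\sqrt K$. Double stochasticity gives $Cu=u$ and $C^\top u=u$, so $C$ maps the subspace $u^\perp$ into itself. The projection $\frac1K J$ equals $uu^\top$ and vanishes on $u^\perp$. Next, $\|C\|_2\le\sqrt{\|C\|_1\|C\|_\infty}=1$, so the singular value $1$ attached to $u$ is the largest. Hence the singular values of $C$ are $1\ge\sigma_2\ge\dots\ge\sigma_K\ge 0$, where $\sigma_2,\dots,\sigma_K$ are the singular values of $C$ restricted to $u^\perp$.

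On the span of $u$, $\tilde C$ acts as the identity. On $u^\perp$, $\tilde C$ acts as $\alpha^2 C$. So the singular values of $\tilde C$ are
\[
(1,\ t\sigma_2,\dots,t\sigma_K),\qquad t:=\alpha^2\in(0,1).
\]
This step is the main obstacle: the whole argument needs the clean block decomposition along $u$, and it is exactly where both double stochasticity and the SLN form of $Q_{sln}$ are used. The balancedness assumption is only needed so that the population-level expectation takes this form.

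\textbf{Step 3: majorization.} Let $S=\sum_{i\ge2}\sigma_i$ and $x(t)=(1,t\sigma_2,\dots,t\sigma_K)/(1+tS)$, so that $InterClass(\tilde C)=H(x(t))$ and $InterClass(C)=H(x(1))$. I would verify directly that
\[
x(t)=\mu\, x(1)+(1-\mu)e_1,\qquad \mu=\frac{t(1+S)}{1+tS}\in[0,1].
\]
The first coordinate of $x(1)$ is its largest entry, so this convex combination keeps the same decreasing order. Its partial sums are therefore $\mu s_k+(1-\mu)\ge s_k$, where $s_k$ are the partial sums of $x(1)$. Hence $x(t)$ majorizes $x(1)$. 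Shannon entropy is Schur-concave \cite{marshall1979inequalities}, so $H(x(t))\le H(x(1))$, which is the claim $InterClass(\tilde C)\le InterClass(C)$.
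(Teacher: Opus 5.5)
Your proposal is correct and follows essentially the same route as the paper. Both arguments reduce $\tilde C$ to $\alpha^2 C+(1-\alpha^2)\frac{1}{K}J$, split off the $\mathbf{1}$ direction to get the singular values $(1,\alpha^2\sigma_2,\dots,\alpha^2\sigma_K)$ (the paper does this by writing $C=U+C_0$ with $U^\top C_0=0$), and conclude via majorization and Schur-concavity of entropy. Your bound $\|C\|_2\le\sqrt{\|C\|_1\|C\|_\infty}=1$ supplies the justification, left implicit in the paper, that $1$ is the largest singular value; and the identity $x(t)=\mu\,x(1)+(1-\mu)e_1$ is a tidier way to get the partial-sum inequalities that the paper checks by cross-multiplication.
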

This theorem intuitively aligns with the node-level findings established in Theorem \ref{thm:sln_nld}. As proven earlier, SLN acts as a smoothing agent that pushes individual neighborhood label distributions toward a uniform state, increasing local entropy. When the local neighborhoods of individual nodes lose their distinct structural signatures, the aggregated class-level patterns naturally blend together.
% We argue that noise increases indistinguishability of the graph; indistinguishability is entropy of the distributions of the singular values of certain matricies formed by clean and noise corrupted NLDs, that are graph averaged. Recall that LENS considers  label of a node to be noise corrupted if its NLD crosses a certain threshold -- \textcolor{blue}{to check/refine}. 
The proof of Theorem \ref{thm:nodedist} is outlined next with details in Appendix \ref{app:interclassdist}. 

\emph{Step 1:} It is first observed that $\tilde{C}=Q_{sln}^\top CQ_{sln}=Q_{sln} CQ_{sln}$. Here is the $Q_{sln}$ transforms any vector $c_k$ to a convex combination (with $\alpha = 1-\frac{pK}{K-1}$) of $c_k$ and vector $u$ (all entries $1/K$). 

\emph{Step 2:} Matrix $C$ is shown to be sum of two matrix $C_0$ and $U$, where $U$ have all entries $1/K$, and columns of $C_0$ and $U$ are orthogonal.

\emph{Step 3:} It turns out all singular values of $\tilde{C}$, except for the highest one, are down-scaled singular values of $C$ (by factor $\alpha^2$). The highest singular values for both remain the same.

\emph{Step 4:} $\tilde{s}$ (the vector of normalized singular value of $\tilde{C}$) majorizes \cite{marshall1979inequalities} $s$ (the vector of normalized singular value of $C$). Using the Schur Concavity of the entropy function \cite{marshall1979inequalities}, we show $H(s^\prime) < H(s)$. Hence, the class distinguishability decreases, making learning hard for GCN \cite{Zhao2026NodeCI}. 

% \emph{Step 4} One can now show $H(s^\prime) < H(s)$, \emph{i.e.}, noise \textcolor{blue}{increases} entropy of distributions of \textcolor{blue}{normalized} singular values. \textcolor{blue}{ Thus indistinguishability is increased which renders (this increase of entropy renders -- to check) learning hard}

%\textcolor{blue}{we are using the converse of this; low entropy of NLD of a node means it is possibly a noisy corrupted node?}

\section{Limitations and Conclusions}

%\textbf{Limitations:}
As discussed in Section \ref{sec:high_fp}, noisy nodes residing in heterogeneous regions are difficult to identify for LENS. LENS may also not work well on datasets where nodes lack class-specific neighborhood structure, like Actor and Coronell, which are termed as graphs with ``Bad Heterophily" by \cite{mahomophily}.

In this paper, we investigate the impact of label noise on the neighborhood label distribution of a node. We observe that the presence of label noise smooths the neighborhood label distribution, leading to difficulty for Graph Convolutional Networks (GCNs) in learning from this noisy data. We formally prove this smoothing effect by demonstrating an increase in the entropy of the neighborhood label distribution. We then use this fact to propose a label-noise-robust algorithm for learning on graph data. The proposed algorithm achieves state-of-the-art performance while using significantly less compute compared to existing noise-robust approaches. We also prove that the presence of label noise reduces interclass distinguishability of a graph. 
Code is available at: \url{https://github.com/pintu-dot/nld}

\bibliographystyle{unsrtnat}
\bibliography{reference}
% For bibLaTeX users:
% \printbibliography

\appendix

\section{Proofs and Derivations}
\subsection{Proof of Theorem \ref{thm:sln_nld} and related Corollaries}
\label{proof:nldsln}
\nldsln*
\begin{proof}
    For a node $v$ and class $j$,  $\hat{c}_{j}$ in the presence of SLN is the sum of two binomial distributions. 

1. Retained: was already in class $j$, retained, and continues to stay in $j$. $c_{\mathrm{stay}}\sim Binomial(c_{j},1-p)$

2. Influx: not originally in class $j$, but influxes to class $j$ after adding noise. $c_{\mathrm{influx}}\sim Binomial(d-c_{j},\frac{p}{K-1})$.
Since noise is independently added to each node and the two sets are disjoint, $c_{\mathrm{stay}}$ and $c_{\mathrm{influx}}$ are independent. This gives  
%\textcolor{blue}{we can have space by writing these in single lines}

% \textcolor{blue}{expectation is  a linear operator; $X$ and $Y$ need not be independent}

% \textcolor{blue}{we are starting with empirical mass function of NLD and it is then changed by noise random variables}

% \textcolor{blue}{variance is not a linear operator; additive, if they are independent as then covariance is zero}

\begin{equation*}
    \begin{aligned}
\mathbb{E}[\hat{c}_{j}]&=\mathbb{E}[c_{\mathrm{stay}}]+\mathbb{E}[c_{\mathrm{influx}}]\\
&=c_j(1-p)+(d-c_j)\frac{p}{K-1}
  \end{aligned}
\end{equation*}
As \textcolor{black}{$l_j=c_j/d$}, so 

$$\textcolor{black}{\mathbb{E}[\hat{l}_{v,j}]}=l_{v,j}(1-p)+(1-l_{v,j})\frac{p}{K-1}$$
\end{proof}
% We can also look at the variance. by independence of $c_{\mathrm{stay}}$ and $c_{\mathrm{influx}}$
% \begin{equation*}
%     \begin{aligned}
%         Var(\hat{l}_j)&=Var \left (\frac{1}{d}c_{\mathrm{stay}}\right )+Var \left (\frac{1}{d}c_{\mathrm{influx}} \right)\\
%         &=\frac{1}{d^2}\left ( c_jp(1-p)+(d-c_j)\frac{p(K-1-p)}{(K-1)^2}\right )\\
%         &=\frac{1}{d}\left ( l_jp(1-p)+(d-l_j)\frac{p(K-1-p)}{(K-1)^2}\right )
%     \end{aligned}
% \end{equation*}
\emph{Alternate Proof:}
\begin{proof}
    Let the initial NLD of a node be (\emph{we drop subscript $v$ from $l_v$ for ease in notation}): $$l = \begin{bmatrix} l_1, & l_2, & \dots & l_K \end{bmatrix}$$

To find the new expected probability mass for any specific class $i$, let's call it $\mathbb{E}[\hat{l}_i]$, we multiply the distribution vector by the transition matrix (Equation \ref{matrix:sln}).

$$\mathbb{E}[\hat{l}_i] = l_i(1 - p) + \sum_{j \neq i} l_j \left( \frac{p}{K-1} \right)$$Because the sum of all probabilities is $1$, we know that $\sum_{j \neq i} l_j = 1 - l_i$. Substituting this in:$$\mathbb{E}[\hat{l}_i] = l_i(1 - p) + (1 - l_i)\left(\frac{p}{K-1}\right)$$Now, we expand and group the $l_i$ terms:$$\mathbb{E}[\hat{l}_i] = l_i - pl_i + \frac{p}{K-1} - l_i\left(\frac{p}{K-1}\right)$$$$\mathbb{E}[\hat{l}_i] = l_i \left( 1 - p - \frac{p}{K-1} \right) + \frac{p}{K-1}$$

Simplifying the term inside the parentheses:

\begin{equation*} 
\mathbb{E}[\hat{l}_i] = l_i \left( 1 - \frac{pK}{K-1} \right) + \frac{p}{K-1}
\end{equation*}
\end{proof}

\subsection{Proof of Theorem \ref{thm:pwn_nld}}
\label{proof:nldpwn}
\nldpwn*
\begin{proof}
Let $c_j$ denote the count of neighbors initially belonging to class $j$, and let $d$ be the total degree of the node. Under Pairwise Noise, instances of class $j$ retain their label with probability $1-p$, and instances from the preceding class $j-1$ flip to class $j$ with probability $p$. 

The number of neighbors remaining in class $j$ follows a binomial distribution, $c_{\mathrm{stay}} \sim \mathrm{Binomial}(c_j, 1-p)$, and the influx of neighbors from the preceding class follows $c_{\mathrm{influx}} \sim \mathrm{Binomial}(c_{j-1}, p)$, subject to the cyclic condition $c_0 \equiv c_K$. 

The expected probability mass for class $j$ is derived directly from these expected counts (\emph{we drop subscript $v$ from $l_v$ for ease in notation}):
$$\textcolor{black}{\mathbb{E}[\hat{l}_j]} = l_j(1-p) + l_{j-1}p$$

% Correspondingly, the variance is given by the sum of the variances of these independent binomial variables:
% $$Var(\hat{l}_j) = \frac{p(1-p)}{d}(c_j + c_{j-1})$$

\textbf{Entropy Inequality:} 
For the initial NLD $l=[l_1, l_2, \dots, l_K]$, we define its cyclic shift as $l_{\mathrm{shift}}=[l_K, l_1, \dots, l_{K-1}]$. 
Based on Equation \ref{eq:pwn_main}, the expected distribution vector can be expressed as a convex combination:
$$\mathbb{E}[\hat{l}] = (1-p)l + p l_{\mathrm{shift}}$$

Since $l_{\mathrm{shift}}$ is simply a permutation of $l$, their entropies are identical, meaning $H(l) = H(l_{\mathrm{shift}})$. Because the entropy function is strictly concave, the entropy of the convex combination of these distributions satisfies Jensen's inequality:
\begin{equation*}
\begin{aligned}
    H(\mathbb{E}[\hat{l})] &\geq (1-p)H(l) + pH(l_{\mathrm{shift}}) \\
               &= (1-p)H(l) + pH(l) \\
               &= H(l)
\end{aligned}
\end{equation*}
Thus, $H(\mathbb{E}[\hat{l}_v]) \geq H(l_v)$, concluding the proof.
\end{proof}

\subsection{Proof of Theorem \ref{thm:doubly_stochastic}}
\label{app:proof_arbitrary}
\nlddb*
\begin{proof}
The Birkhoff-von Neumann theorem \cite{marshall1979inequalities} states that any doubly stochastic matrix can be expressed as a convex combination of permutation matrices, say $r$ permutation matrices. So, let $Q = \sum_{j=1}^r \theta_j P_j$, where $P_j$ are permutation matrices, $\theta_j \geq 0$, and $\sum_{j=1}^r \theta_j = 1$ . 
%\textcolor{blue}{changed $k$ to $r$}

The expected NLD is $\mathbb{E}[\hat{l}] = lQ = \sum_{j=1}^r \theta_j (l P_j)$. 
Since the entropy function $H$ is strictly concave, we can apply Jensen's inequality:
\begin{equation*}
    H(\mathbb{E}[\hat{l}]) \geq \sum_{j=1}^r \theta_j H(l P_j)
\end{equation*}
Because permutation matrices simply reorder the elements of $l$ without changing their values, the entropy remains invariant, meaning $H(l P_j) = H(l)$ for all $j$. Therefore:
\begin{equation*}
    H(\mathbb{E}[\hat{l}]) \geq \sum_{j=1}^r \theta_j H(l) = H(l) \sum_{j=1}^r \theta_j = H(l)
\end{equation*}
This concludes the proof.
\end{proof}

\subsection{Counterexamples for Remark \ref{rem:counterexamples}}
\label{app:counterexamples}

The aim of the following counterexamples is to demonstrate that the entropy-increasing property of label noise is not universal; it is intrinsically tied to the doubly stochastic nature of the noise transition matrix. When these conditions are relaxed, noise can artificially collapse the distribution, leading to a decrease in neighborhood entropy.

\textbf{Counterexample 1: Non-Doubly Stochastic Matrix.}

Consider a $3$-class setting with a noise transition matrix $Q$ that is row-stochastic but not doubly stochastic (the columns do not sum to $1$):
$$Q=\begin{bmatrix}
0.5 & 0.5 & 0 \\
0.5 & 0.5 & 0\\
0.5 & 0 & 0.5
\end{bmatrix}$$
Let the initial NLD be a uniform distribution $l=[1/3, 1/3, 1/3]$. The expected NLD after noise is applied is $\mathbb{E}[\hat{l} = lQ = [1/2, 1/3, 1/6]$. The initial entropy is at its maximum for a 3-class distribution, whereas the noisy entropy is lower. Thus, $H(\mathbb{E}[\hat{l}]) < H(l)$.

\textbf{Counterexample 2: Node-Dependent Noise.}

Consider a situation where the transition probability matrix is node-specific. Even if every node's specific matrix is doubly stochastic, the overall NLD entropy might decrease.
Let the number of classes be $2$, and the noise model be dependent on the node's true class. The transition matrices are defined as: 
$$Q_{u \in \text{class } 1}=\begin{bmatrix} 1 & 0 \\ 0 & 1 \end{bmatrix} \quad \text{and} \quad Q_{u \in \text{class } 2}=\begin{bmatrix} 0 & 1 \\ 1 & 0 \end{bmatrix}$$
Suppose a node's initial NLD is $l=[0.9, 0.1]$. Because the neighbors originally in class 1 remain in class 1 (via $Q_{u \in \text{class } 1}$), and the neighbors in class 2 flip to class 1 (via $Q_{u \in \text{class } 2}$), the expected NLD concentrates entirely into the first class, resulting in $\mathbb{E}[\hat{l}] = [1, 0]$. 
The initial distribution $l$ has non-zero entropy, while the resulting distribution $\mathbb{E}[\hat{l}]$ has an entropy of zero. Therefore, $H(\mathbb{E}[\hat{l}]) < H(l)$.

\subsection{Proof for Theorem \ref{thm:hpb}}
\label{app:proof_hpb}
{\color{black}
\hpb*
\begin{proof}
    \textit{Step 1: Concentration Inequality:}
    
    Let $e_k$ denote the standard one-hot basis vector for class $k$, then 
    $$\hat{l}_v=\frac{1}{d_v}\sum_{u\in N(v)}e_{\hat{y_u}}$$

So, $\hat{l}_v$ is a realization from a multinomially distributed vector. Using Bretangolle-Huber-Carol inequality \cite{vanderVaartWellner2003ed2}, for given $\epsilon>0$

\begin{equation}
    \mathbb{P}(||\hat{l}_v-\mathbb{E}[\hat{l}_v]||_1\ge\epsilon)\le 2^K exp\left(\frac{-d_v\epsilon^2}{2}\right)
\end{equation}

Equivalently, with probability $1-\delta$

\begin{equation}
\label{eq:concentration}
    \|\hat{l}_v-\mathbb{E}[\hat{l}_v]\|_1\leq \sqrt{\frac{2~(K\ln(2)+\ln(1/\delta))}{d_v}}
\end{equation}

\textit{Step 2: Continuity of Shannon entropy}

For two distributions $\mathbb{p}$ and $\mathbb{q}$ Fannes–Audenaert gives \cite{Audenaert2006ASC}

\begin{equation}
    \label{eq:fannes:aud}
    |H(\mathbf{p})-H(\mathbf{q})\leq| T\ln(K-1)+H([T,1-T])
\end{equation}

where $T=\frac{1}{2}\sum_{i=1}^K|p_i-q_i|$.

Using Equations \ref{eq:concentration} and \ref{eq:fannes:aud}.
\begin{equation}
    |H(\hat{l}_v)-H(\mathbb{E}[\hat{l}_v])|\leq \sqrt{\frac{K\ln(2)+\ln(1/\delta)}{2~d_v}} \ln(K-1) + H([T,1-T])
\end{equation}

which is equivalent to 
\begin{equation}
    |H(\hat{l}_v)-H(\mathbb{E}[\hat{l}_v])|\leq \sqrt{\frac{K\ln(2)+\ln(1/\delta)}{2~d_v}} \ln(K-1) + \ln 2
\end{equation}
\end{proof}

We can observe that the gap decreases as the degree of the node increases. Also, if the number of classes is high, the gap is large for small degree nodes.

Next, we are interested in checking when $H(\hat{l}_v)>H(l)$ with high probability. In Theorem \ref{thm:doubly_stochastic} we showed for a doubly stochastic noise $H(\mathbb{E}[\hat{l}_v])\geq H(l_v)$, let

\begin{equation}
    \label{eq:delta_def}
    \Delta_v:=H(\mathbb{E}[\hat{l}_v])-H(l_v)\ge0
\end{equation}
Also assume

$$H(\hat{l}_v)\geq H(\mathbb{E}[\hat{l}_v])-\mathrm{error}$$

Using Equation \ref{eq:delta_def}
$$H(\hat{l}_v)\geq H(l_v)+\Delta_v-\mathrm{error}$$
We see that $H(\hat{l}_v)>H(l_v)$ if $\Delta_V\ge \mathrm{error}$. This happens with probability $1-\delta$, when 
\begin{equation}
    \Delta_v\geq C\sqrt{\frac{K+\ln(1/\delta)}{d_v}}\ln(K)
\end{equation}

Now using Equation \ref{eq:delta_def}, if $H(l_v)$ is high, then intuitively $H(\hat{l}_v)\geq H(l)$ holds only for nodes with very high degree, justifying extra filtering used by LENS. 
}

\section{Impact of label noise on Interclass distinguishability}
\label{app:interclassdist}
NLD discussed in Sections \ref{sec:nld_main} and \ref{sec:nld_res} are node-level quantities.
\cite{Zhao2026NodeCI} aggregates NLD to obtain graph-level quantities and links its learnability of GCN. One such quantity in \cite{Zhao2026NodeCI} is "Interclass Node Distinguishability", which is the degree of difference between the NLDs of nodes belonging to different class labels. To quantify this, \cite{Zhao2026NodeCI} looks at macro-patterns for each class.

The major steps in obtaining the Interclass Node distinguishability are:

%Here is how they build the metric:
\begin{itemize}
    \item 
The Class Pattern:  The average Neighborhood Label Distribution (NLD) for all nodes within a specific class $k \in [K]$ is computed. This gives a vector $c_k$ representing the neighborhood distribution of a class.  

\item The Matrix: These vectors ($c_k$) are stacked  as columns to form a $K \times K$ matrix called Class NLD.  

\item The Rank: To see how distinct these classes are from one another, the rank of this matrix is evaluated.   In an ideal scenario where every class has a unique, linearly independent neighborhood pattern, the matrix is of full rank.  

\item The Entropy Proxy: \cite{Zhao2026NodeCI} argues that, ``\textit{Because computing the strict mathematical rank is slow and binary}", hence a continous valued proxy is used. We would like to highlight that because of real-world noise, NLD would have a higher rank than its clean counterpart,  hence using a proxy also solves this. The proxy used the entropy of the matrix's normalized singular values.  Let the normalized singular values be $\sigma_i$; then, the distinguishability is defined as: 

$$InterClass = \frac{-\sum_{i=1}^K \sigma_i \log(\sigma_i)}{\log(K)}$$

A high entropy value means the singular values are distributed uniformly, indicating a full-rank, highly distinguishable matrix. A low entropy value indicates the matrix is collapsing toward rank 1, meaning the classes are indistinguishable. As we want to compare the distinguishability before and after noise, we safely ignore normalization by $log(K)$.

\end{itemize}

\distchange*

\begin{proof}

Let $C$ be our original $K \times K$ $ClassNLD$ matrix (where columns are the average NLDs for each class). Let $\sigma_1, \sigma_2, ..., \sigma_K$ be its original singular values, sorted in descending order.

\textbf{Step 1: The Matrix Transformation}

When SLN is applied to the graph with noise rate $p$, every true label has a chance to flip. As we established previously (Section
\ref{sec:nld_res}), this applies an expected transition matrix $Q_{sln}$ to the label distributions.$$\mathbb{E}[\hat{l}] = \alpha l + (1-\alpha)U[1,:]$$where $\alpha = 1 - \frac{pK}{K-1}$ (the contraction factor) and $U$ is a matrix where every entry is $\frac{1}{K}$.

The new population level expected Class NLD matrix, let us call it $\tilde{C}$, is given by ($\tilde{c}_j=\mathbb{E}[\hat{l}_v^\top|\hat{y}_v=j]$):$$\tilde{C} = Q_{sln}^\top CQ_{sln}=Q_{sln}C Q_{sln}$$
$$\implies\tilde{C} = \left(\alpha C + (1-\alpha)C U\right)Q_{sln}$$

Because every row of $C$ is a valid probability distribution (summing to 1), multiplying $C$ by the uniform matrix $U$ simply results in $U$ again.
\begin{equation}
\label{eq:noisy_C}    
\tilde{C} = \left(\alpha C + (1-\alpha)U\right)Q_{sln}=\alpha CQ_{sln}+(1-\alpha)U=\alpha^2C+(1-\alpha^2)U
\end{equation}

\textbf{Step 2: The singular value decay}

The next step is to find the singular values for the matrix $\tilde{C}$. Let $\sigma_1,\cdots, \sigma_r$ be singular values of $C$, where $\sigma_1$ is the largest singular value. 

\textbf{We claim that:} $$\sigma_1^{\prime} = \sigma_1$$$$\sigma_i^{\prime} = \alpha^2 \sigma_i \quad \text{for } i \ge 2$$

\textbf{Proof for this claim:}
{\color{black}
By definition, every column of $C$ sums to 1. This makes $C$ a column-stochastic matrix.

Step 2.1: Orthogonal Matrix Decomposition

Let $U$ be a matrix of the same size ($K \times K$) where every entry is $\frac{1}{K}$. We can write $U = \frac{1}{K}\mathbf{1}\mathbf{1}^T$, where $\mathbf{1}$ is a column vector of all ones. Because $C$ is column-stochastic, multiplying it by $U$ simply yields $U$:$$U C = U$$We can decompose the matrix $C$ into two components: its uniform mean component and its structural variance component. 
Let $C_0$ be defined as: $$C_0 = C - U$$
That is, 
$$C = U + C_0$$ %\textcolor{blue}{we need `that is'}

%Let $C_0$ be the variance component:$$C_0 = C - U$$$$C = U + C_0$$ \textcolor{blue}{we need `that is'}

Step 2.2: Orthogonality of columns of $U$ and $C_0$
 
We now show that every column of $C_0$ is orthogonal to the uniform direction. As $C$ is column-stochastic ($\mathbf{1}^T C = \mathbf{1}^T$) and $U$ is column-stochastic ($\mathbf{1}^T U = \mathbf{1}^T$), the columns of $C_0$ must sum to zero:$$\mathbf{1}^T C_0 = \mathbf{1}^T (C - U) = \mathbf{1}^T - \mathbf{1}^T = 0$$Because $\mathbf{1}^T C_0 = 0$, we have that $U^T C_0$ is a zero matrix: $$U^T C_0 = (\frac{1}{K}\mathbf{1}\mathbf{1}^T) C_0 = \frac{1}{K}\mathbf{1}(\mathbf{1}^T C_0) = \frac{1}{K}\mathbf{1}(0) = 0$$

Step 2.3: The Singular Values of the Original Matrix

The singular values of $C$ are the square roots of the eigenvalues of $C^T C$.$$C^T C = (U + C_0)^T(U + C_0)$$$$C^T C = U^T U + U^T C_0 + C_0^T U + C_0^T C_0$$Because $U^T C_0 = 0$ (and $C_0^T U = 0$), the middle cross-terms vanish. Furthermore, $U$ is an idempotent projection matrix, meaning $U^T U = U$. $ C^T C = U + C_0^T C_0$. Because the subspaces are perfectly orthogonal and $C$ is row stochastic, the eigenvalues of this sum are simply the union of their individual eigenvalues. $U$ has exactly one non-zero eigenvalue of $1$. Hence, the singular values of $C$ are exactly:$$\sigma_1 = 1$$$$\sigma_i = \sigma_{i-1}(C_0) \quad \text{for } i \ge 2$$

Step 2.4: The Singular Values of the Noisy Matrix
 
By Equation \ref{eq:noisy_C}:
$$\tilde{C} =\alpha^2(C_0+U)+U\left(1-\alpha^2\right)=U+\alpha^2C_0$$

We repeat the SVD process for $\tilde{C}$:$$\tilde{C}^T\tilde{C} = (U + \alpha^2 C_0)^T(U + \alpha^2 C_0) =  U + \alpha^4 C_0^T C_0$$
Taking the square roots of the eigenvalues, we get the exact singular values of the noisy matrix:$$\tilde{\sigma}_1 = 1$$$$\tilde{\sigma}_i \alpha^2 = \alpha^2 \sigma_i(C_0) = \alpha^2 \sigma_i \quad \text{for } i \ge 2$$

This means that the dominant singular value remains exactly 1, while all secondary singular values are scaled down by exactly the contraction factor $\alpha^2$.
}

\textbf{Step 3: Normalization and Skew}

%The paper normalizes the singular values to sum to 1 before calculating entropy.
Let $S$ be the sum of the original singular values. The original normalized values are $s_i = \frac{\sigma_i}{S}$.  Let $\tilde{S}$ be the sum of the new singular values:$$\tilde{S} = \sigma_1 + \alpha^2 \sum_{i=2}^r \sigma_i$$ 
The new normalized singular values are:$$\tilde{s}_1 = \frac{\sigma_1}{\tilde{S}}$$$$\tilde{s}_i = \frac{\alpha^2 \sigma_i}{\tilde{S}} \quad \text{for } i \ge 2$$

\textbf{Step 4: The Entropy Drop}

Because the contraction factor $\alpha^2$ is strictly less than 1, the new sum $\tilde{S}$ is smaller than the original sum $S$. 
%This causes a mathematical \textcolor{blue}{should drop the `mathematical'} phenomenon called majorization:
The dominant normalized singular value $\tilde{s}_1$ grows significantly larger than its original counterpart $s_1$.
%\textcolor{blue}{these 2 sentences are the same, said in 2 ways.} 
The smaller normalized singular values $\tilde{s}_i$ shrink relative to the dominant one. The distribution of the singular values becomes drastically more skewed (concentrated on the first index). So $\tilde{s}$ majorizes $s$ (we will prove this), and entropy being a Schur concave function \cite{marshall1979inequalities} $H(\tilde{s})<H(s)$. 
The only thing left in the proof is to prove majorization.

\textbf{Proving majorization:}

Vector $x'$ is said to majorize $x$ (assuming both are sorted in descending order and sum to 1), if the cumulative sum of the top $k$ elements of $x'$ is greater than or equal to the cumulative sum of the top $k$ elements of $x$, for all $k$ from $1$ to $K$ \citep{marshall1979inequalities}. This is denoted by $x' \succ x$. This holds for our $\tilde{s}$ and $s$ ($\tilde{s} \succ s$) using the intermediate results of Step 3. 

\emph{Claim:} For any index $k < K$:
  $$\sum_{i=1}^k s'_i \ge \sum_{i=1}^k s_i$$ 
  
  Using the above definitions of the normalized singular values, we have:
  $$\frac{\sigma_1 + \alpha^2 \sum_{i=2}^k \sigma_i}{\sigma_1 + \alpha^2 \sum_{i=2}^K \sigma_i} \ge \frac{\sigma_1 + \sum_{i=2}^k \sigma_i}{\sigma_1 + \sum_{i=2}^K \sigma_i}$$
  
  %To make the algebra cleaner, let us substitute the components with variables:
  To simplify the notation, we take 
$A : = \sigma_1$, $ B := \sum_{i=2}^k \sigma_i$ and $ C : = \sum_{i=k+1}^K \sigma_i$.

Then, the inequality becomes:$$\frac{A + \alpha^2 B}{A + \alpha^2 B + \alpha^2 D} \ge \frac{A + B}{A + B + D}$$

% We then have :$$(A + \alpha B)(A + B + C) \ge (A + B)(A + \alpha B + \alpha C)$$ 

We then have, $$A^2 + AB + AD + \alpha^2 AB + \alpha^2 B^2 + \alpha^2 BD \ge A^2 + \alpha^2 AB + \alpha^2 AD + AB + \alpha^2 B^2 + \alpha^2 BD$$

%Most terms cancel out perfectly. Subtracting the identical terms from both sides leaves only:

That is, $$AD \ge \alpha^2 AD$$ Since $A$ (the dominant singular value) and $D$ (the sum of the tail singular values) are strictly positive, and the contraction factor $\alpha$ is strictly less than 1, the value $AD$ is greater than or equal to $\alpha^2 AD$ (equal when $D=0$).  This proves that the partial sums of $\tilde{s}$ are always strictly greater than the partial sums of $s$. Therefore, $\tilde{s} \succ s$.
\end{proof}

\section{Comparison with Simple Neighborhood-Based Heuristics}
\label{app:heuristic_comparison}

We compare LENS with four heuristics similar to LENS for identifying noisy labeled nodes. These heuristics use either agreement with neighboring labels or majority labels within the two-hop neighborhood.

\paragraph{Homophily-based heuristics.}
These methods compare a node's label with the labels of its one-hop neighbors:
\begin{itemize}
    \item \textbf{Hom\_fraction:} A node is identified as noisy if its label disagrees with the labels of more than half of its neighbors.
    \item \textbf{Hom\_majority:} A node is identified as noisy if its label differs from the most frequent label among its neighbors.
\end{itemize}

\paragraph{Neighbor's neighbor-based heuristics.}
These methods additionally use label information from the two-hop neighborhood:
\begin{itemize}
    \item \textbf{Nbr\_rarity:} A node is identified as noisy if its label differs from the most frequent label among its two-hop neighbors, including the node itself.
    \item \textbf{Nbr\_majority:} A node is identified as noisy if its label disagrees with the local majority label of more than half of its one-hop neighbors.
\end{itemize}
Here, majority label refers to the most frequent label and does not necessarily require more than half of the labels to agree.

We evaluate these heuristics and both variants of LENS on Cora, a homophilic graph, and Roman Empire, a heterophilic graph, under SLN and PWN. We follow the same experimental setup as in the main paper. The node classification results are presented in Table~\ref{tab:heuristic_comparison}.

\begin{table}[!htpb]
\centering
\caption{Node classification accuracy (\%, mean $\pm$ standard deviation) of neighborhood-based heuristics and LENS on Cora, Citeseer, Amazon Photo, and Roman Empire under Symmetric Label Noise (SLN) and Pairwise Label Noise (PWN).}
\label{tab:heuristic_comparison}
\resizebox{\linewidth}{!}{
\begin{tabular}{ll ccccc cccc}
\toprule
\multirow{2}{*}{\textbf{Dataset}} & \multirow{2}{*}{\textbf{Algorithm}} & \multicolumn{5}{c}{\textbf{Symmetric Label Noise (SLN) (\%)}} & \multicolumn{4}{c}{\textbf{Pairwise Label Noise (PWN) (\%)}} \\
\cmidrule(lr){3-7} \cmidrule(lr){8-11}
& & \textbf{10} & \textbf{20} & \textbf{30} & \textbf{40} & \textbf{50} & \textbf{10} & \textbf{20} & \textbf{30} & \textbf{40} \\
\midrule
\multirow{6}{*}{Cora}
& Hom\_fraction & $86.1 \pm 0.3$ & $85.0 \pm 0.6$ & $84.9 \pm 0.3$ & $81.3 \pm 0.6$ & $79.4 \pm 0.4$ & $85.9 \pm 0.6$ & $83.9 \pm 0.8$ & $78.0 \pm 1.7$ & $65.2 \pm 2.9$ \\
& Hom\_majority & $86.1 \pm 0.4$ & $85.6 \pm 0.3$ & $84.6 \pm 0.8$ & $81.1 \pm 0.6$ & $79.5 \pm 1.1$ & $85.8 \pm 0.3$ & $84.6 \pm 0.7$ & $79.1 \pm 1.4$ & $65.7 \pm 3.1$ \\
& Nbr\_rarity & $86.7 \pm 0.4$ & $86.1 \pm 0.4$ & $84.4 \pm 0.7$ & $82.2 \pm 0.6$ & $80.3 \pm 1.0$ & $86.3 \pm 0.2$ & $83.9 \pm 1.1$ & $78.3 \pm 1.1$ & $70.3 \pm 4.2$ \\
& Nbr\_majority & $86.8 \pm 0.3$ & $86.1 \pm 0.6$ & $84.2 \pm 0.3$ & $82.4 \pm 0.7$ & $80.8 \pm 1.2$ & $86.4 \pm 0.5$ & $83.6 \pm 1.0$ & $78.5 \pm 1.5$ & $69.8 \pm 4.8$ \\
& LENS (Sum) & $86.6 \pm 0.3$ & $86.0 \pm 0.6$ & $85.3 \pm 0.3$ & $82.7 \pm 1.1$ & $81.6 \pm 1.1$ & $86.3 \pm 0.7$ & $84.2 \pm 0.9$ & $79.2 \pm 1.9$ & $70.8 \pm 2.7$ \\
& LENS (Vote) & $86.8 \pm 0.3$ & $85.9 \pm 0.6$ & $85.8 \pm 0.5$ & $82.6 \pm 1.2$ & $81.3 \pm 0.7$ & $86.3 \pm 0.3$ & $83.6 \pm 0.9$ & $78.9 \pm 1.8$ & $71.9 \pm 2.3$ \\
\midrule
\multirow{6}{*}{Citeseer}
& Hom\_fraction & $78.0 \pm 1.0$ & $76.6 \pm 0.9$ & $74.7 \pm 0.7$ & $71.8 \pm 1.4$ & $68.1 \pm 1.3$ & $78.1 \pm 0.3$ & $76.7 \pm 0.8$ & $73.6 \pm 0.4$ & $62.8 \pm 1.6$ \\
& Hom\_majority & $77.6 \pm 0.6$ & $76.6 \pm 1.0$ & $74.7 \pm 0.3$ & $71.9 \pm 1.2$ & $69.4 \pm 0.8$ & $78.3 \pm 0.3$ & $76.3 \pm 0.5$ & $73.4 \pm 0.3$ & $63.4 \pm 2.3$ \\
& Nbr\_rarity & $77.9 \pm 0.3$ & $77.2 \pm 0.5$ & $75.4 \pm 0.8$ & $72.7 \pm 0.9$ & $69.5 \pm 1.0$ & $78.2 \pm 0.6$ & $76.9 \pm 1.0$ & $72.6 \pm 0.4$ & $66.3 \pm 1.4$ \\
& Nbr\_majority & $78.2 \pm 0.3$ & $77.0 \pm 0.5$ & $75.7 \pm 0.6$ & $73.4 \pm 1.3$ & $69.8 \pm 1.0$ & $78.0 \pm 0.4$ & $76.5 \pm 0.5$ & $72.6 \pm 0.5$ & $66.4 \pm 0.8$ \\
& LENS (Sum) & $76.9 \pm 1.1$ & $75.7 \pm 0.7$ & $74.7 \pm 1.1$ & $73.7 \pm 0.6$ & $71.6 \pm 1.2$ & $76.8 \pm 1.0$ & $75.8 \pm 0.7$ & $74.3 \pm 1.1$ & $65.7 \pm 2.5$ \\
& LENS (Vote) & $77.3 \pm 1.0$ & $76.4 \pm 0.7$ & $75.2 \pm 0.8$ & $74.5 \pm 0.9$ & $72.0 \pm 1.2$ & $77.3 \pm 0.6$ & $76.0 \pm 0.9$ & $74.1 \pm 1.1$ & $65.7 \pm 2.2$ \\
\midrule
\multirow{6}{*}{Amazon Photo}
& Hom\_fraction & $91.9 \pm 0.4$ & $90.5 \pm 0.1$ & $89.2 \pm 1.6$ & $88.2 \pm 1.9$ & $83.4 \pm 4.3$ & $92.0 \pm 0.7$ & $88.8 \pm 2.8$ & $82.5 \pm 5.7$ & $69.8 \pm 7.7$ \\
& Hom\_majority & $91.8 \pm 0.6$ & $90.5 \pm 0.3$ & $89.7 \pm 1.1$ & $88.1 \pm 2.1$ & $84.2 \pm 3.9$ & $92.0 \pm 0.5$ & $88.9 \pm 2.7$ & $82.8 \pm 5.4$ & $68.8 \pm 7.1$ \\
& Nbr\_rarity & $89.1 \pm 0.7$ & $86.8 \pm 2.1$ & $88.3 \pm 1.1$ & $86.5 \pm 2.3$ & $82.9 \pm 3.1$ & $89.9 \pm 0.9$ & $89.4 \pm 1.7$ & $82.8 \pm 8.6$ & $78.7 \pm 4.6$ \\
& Nbr\_majority & $91.6 \pm 0.6$ & $89.4 \pm 2.3$ & $90.6 \pm 0.8$ & $89.2 \pm 1.2$ & $85.9 \pm 3.2$ & $91.4 \pm 0.3$ & $90.9 \pm 1.2$ & $82.7 \pm 9.0$ & $78.0 \pm 4.6$ \\
& LENS (Sum) & $91.1 \pm 0.7$ & $90.3 \pm 1.2$ & $90.4 \pm 1.1$ & $89.0 \pm 1.0$ & $87.4 \pm 1.9$ & $91.1 \pm 0.8$ & $89.2 \pm 1.1$ & $86.3 \pm 3.2$ & $77.8 \pm 4.4$ \\
& LENS (Vote) & $91.6 \pm 0.4$ & $91.1 \pm 0.7$ & $90.4 \pm 0.7$ & $89.0 \pm 1.0$ & $86.7 \pm 1.7$ & $91.0 \pm 1.3$ & $89.8 \pm 1.2$ & $86.5 \pm 2.7$ & $77.7 \pm 3.9$ \\
\midrule
\multirow{6}{*}{Roman Empire}
& Hom\_fraction & $37.7 \pm 2.7$ & $38.0 \pm 1.5$ & $36.5 \pm 1.3$ & $36.1 \pm 4.2$ & $31.6 \pm 2.8$ & $34.3 \pm 1.1$ & $31.9 \pm 2.0$ & $31.5 \pm 1.9$ & $29.8 \pm 1.0$ \\
& Hom\_majority & $69.0 \pm 0.3$ & $67.3 \pm 0.4$ & $65.3 \pm 0.8$ & $63.1 \pm 1.0$ & $59.3 \pm 0.5$ & $66.8 \pm 0.6$ & $60.2 \pm 1.0$ & $53.1 \pm 0.6$ & $45.3 \pm 0.9$ \\
& Nbr\_rarity & $72.7 \pm 0.2$ & $71.4 \pm 0.3$ & $69.3 \pm 0.8$ & $67.0 \pm 0.4$ & $65.0 \pm 0.6$ & $72.3 \pm 0.3$ & $70.1 \pm 0.4$ & $65.7 \pm 1.1$ & $54.4 \pm 0.7$ \\
& Nbr\_majority & $64.7 \pm 0.6$ & $62.4 \pm 0.5$ & $60.3 \pm 0.9$ & $57.1 \pm 0.5$ & $54.9 \pm 0.5$ & $62.2 \pm 1.0$ & $55.8 \pm 0.6$ & $46.6 \pm 0.9$ & $39.7 \pm 1.2$ \\
& LENS (Sum) & $72.6 \pm 0.8$ & $71.6 \pm 1.0$ & $70.3 \pm 0.5$ & $69.3 \pm 0.5$ & $67.3 \pm 1.1$ & $72.2 \pm 0.9$ & $70.7 \pm 0.9$ & $68.8 \pm 0.5$ & $62.2 \pm 1.3$ \\
& LENS (Vote) & $72.5 \pm 0.5$ & $71.5 \pm 0.7$ & $70.2 \pm 0.5$ & $69.2 \pm 0.3$ & $67.4 \pm 1.0$ & $72.4 \pm 0.7$ & $70.9 \pm 0.6$ & $68.8 \pm 0.7$ & $62.0 \pm 1.3$ \\
\bottomrule
\end{tabular}
}
\end{table}

On Cora, LENS and the neighbor's neighbor-based heuristics perform comparably at low noise levels, while LENS generally achieves higher accuracy as noise increases. On Roman Empire, Nbr\_rarity is the strongest heuristic and performs comparably to LENS at low noise levels. However, the advantage of LENS becomes more pronounced at higher noise levels. At 50\% SLN, LENS (Vote) achieves an accuracy of 67.4\%, compared with 65.0\% for Nbr\_rarity. At 40\% PWN, LENS (Sum) achieves 62.2\%, compared with 54.4\% for Nbr\_rarity.

These observations can be understood through the neighborhood information used by the methods. The heuristics rely on label agreement or majority labels, whereas LENS measures leave-one-out changes in the entropy of the entire neighborhood label distribution. On homophilic graphs, where neighboring labels are often concentrated in one class, majority-based decisions can closely align with those of LENS. On heterophilic graphs, useful neighborhood patterns may involve several classes and need not be captured by a single majority label. For example, two classes can share the same most frequent neighboring label while differing in the distribution of the remaining labels. By considering all class proportions through NLD entropy, LENS retains information that majority-based heuristics discard. The results support the usefulness of this distribution-based screening mechanism, particularly on heterophilic graphs at higher noise levels.

\section{Related Works}
\label{app:relatedworks}
Graph Neural Networks (GNNs) have become highly popular due to their success in modeling relational data \citep{kipf2016semi,chev,Fey/Lenssen/2019,sage}. Among these approaches, Graph Convolutional Networks (GCNs) \citep{kipf2016semi} stand out as a widely used message-passing technique. A significant body of past research has focused on the challenge of learning with label noise in graph datasets.

For example, D-GNN \citep{NT2019LearningGN} applies a backward loss correction method to handle noise. Similarly, NRGNN \citep{dai2021nrgnn} reduces the impact of noisy labels by linking unlabeled nodes to labeled ones that share similar features, which helps generate reliable pseudo-labels for better training. Coteaching+ \citep{Yu2019HowDD} builds resilience by simultaneously training two networks and updating the training data based on how confident each network is in its predictions.

RTGNN \citep{Qian2022RobustTO} tackles noise through three specific steps: bridging labeled and unlabeled nodes, using dual GCNs to find and lessen noise, and relying on deep learning memory mechanisms to enforce consistency and self-correction. To defend against adversarial label flipping, CP \citep{CPZhang2020AdversarialLA} relies on a label smoothness assumption to spot and fix noisy labels, maintaining alignment between the graph structure and node labels. RNCGLN \citep{RncglnZhu2024RobustNC} adopts a self-training approach with pseudo-labeling, where a classifier predicts labels for all nodes and replaces original labels that have low confidence scores.

PIGNN \citep{pignnDu2021NoiserobustGL} focuses on pairwise interactions between nodes, as these are more robust to noise than single node labels. It estimates interaction labels from the graph structure and uses them to regularize the classification model, keeping embeddings similar for strongly connected nodes. CGNN \citep{cgnnYuan2023LearningOG} combines graph contrastive learning for consistent representations with a sample selection technique based on homophily, allowing it to correct nodes that clash with their neighbors.

CRGNN \citep{crgnnLi2024ContrastiveLO} limits noise impacts by blending contrastive learning with a dynamic cross-entropy loss that targets nodes with stable predictions across different augmented views. TSS \citep{TSS} uses Class-conditional Betweenness Centrality to map out node positions relative to class boundaries and applies a curriculum learning strategy. This method starts training on clean nodes far from class boundaries and slowly includes harder nodes closer to the borders. DeGLIF \cite{DeGLIf} introduced a targeted denoising technique that utilizes a small set of clean data alongside the leave-one-out influence function to achieve label-noise robust node-level predictions. By estimating the change in validation loss when a training node is removed, DeGLIF identifies and relabels noisy nodes without requiring any prior knowledge of the noise model or overall noise level.

\clearpage
\newpage

\end{document}